\documentclass[]{article}

\usepackage{times}
\usepackage{amsmath}
\usepackage{amssymb}
\usepackage{amsfonts}
\usepackage{mathrsfs}
\usepackage{subfigure}
\usepackage{amsthm}
\usepackage{fixme}
\usepackage{url}
\usepackage{sidecap}
\usepackage{authblk}
\usepackage[top=1.9cm, bottom=1.9cm, left=2cm, right=2cm]{geometry}
\usepackage{cite}
\usepackage{graphicx}
\usepackage{color}
\usepackage{pstricks}

\pdfminorversion=5

\newrgbcolor{aasacolor}{.7 .1 .1}

%
%
%
%

\newcommand{\R}{\mathbb{R}}

\newcommand{\N}{\mathbb{N}}

\newenvironment{packed_itemize}{
\begin{itemize}
  \setlength{\itemsep}{1pt}
  \setlength{\parskip}{0pt}
  \setlength{\parsep}{0pt}
}{\end{itemize}}

\newcounter{example}
\newcounter{theorem}

\newcounter{corollary}
\newcounter{lemma}
\newcounter{definition}

\newtheorem{thm}[theorem]{Theorem}
\newtheorem{lem}[lemma]{Lemma}

\newtheorem{cor}[corollary]{Corollary}
\newtheorem{exA}[example]{Example}
\newtheorem{defnA}[definition]{Definition}

\newenvironment{defn}{\begin{defnA}\rm}{\end{defnA}}

\newtheorem{remA}[equation]{Remark}
\newenvironment{rem}{\begin{remA}\rm}{\end{remA}}
\newenvironment{ex}{\begin{exA}\rm}{\end{exA}}

\newcommand{\etal}[0]{et al.\ }
\newcommand{\eg}[0]{e.g.\ }


\synctex=1


\usepackage[pagebackref=false,breaklinks=true,letterpaper=true,colorlinks,bookmarks=false]{hyperref}

\begin{document}

\title{Geodesic Exponential Kernels: When Curvature and Linearity Conflict}

\author[1]{Aasa Feragen \thanks{\texttt{aasa@diku.dk}}}
\affil[1]{Department of Computer Science, University of Copenhagen}

\author[1]{Fran\c{c}ois Lauze \thanks{\texttt{francois@diku.dk}}}

\author[2]{S\o{}ren Hauberg \thanks{\texttt{sohau@dtu.dk}}}
\affil[2]{Section for Cognitive Systems, DTU Compute}

\maketitle

\begin{abstract}
We consider kernel methods on general geodesic metric spaces and provide both negative and positive results. First we show that the common Gaussian kernel can only be generalized to a positive definite kernel on a geodesic metric space if the space is flat. As a result, for data on a Riemannian manifold, the geodesic Gaussian kernel is only positive definite if the Riemannian manifold is Euclidean. This implies that any attempt to design geodesic Gaussian kernels on curved Riemannian manifolds is futile. However, we show that for spaces with conditionally negative definite distances the geodesic Laplacian kernel can be generalized while retaining positive definiteness. This implies that geodesic Laplacian kernels can be generalized to some curved spaces, including spheres and hyperbolic spaces. Our theoretical results are verified empirically.
\end{abstract}


\section{Introduction}

Standard statistics and machine learning tools require input data residing in a
Euclidean space. However, many types of data are more faithfully represented in general
nonlinear metric spaces (\eg Riemannian manifolds). This is, for instance, the case
for shapes~\cite{Kendall:BLMS:1984, Taheri:AFGR:2011, Freifeld:ECCV:2012, Srivastava:PAMI:2005, Younes:ICV:2012,hinkle,bronsteins},
DTI images \cite{lenglet:eccv:2004, Pennec:IJCV:2006,fletcher,wang2014tracking}, motion models~\cite{turaga,Cetingul:cvpr:2009},
symmetric positive definite matrices \cite{Tuzel:ECCV:2006, Porikli:CVPR:2006,carreira}, illumination-invariance~\cite{caseiro},
human poses \cite{Murray:Book:1994, Hauberg:IMAVIS:2011},
tree structured data \cite{feragen:pami:2013, feragen:ipmi:2013},
metrics \cite{hauberg:nips:2012, gil1992riemannian}
and probability distributions~\cite{amari:book:2000}. The underlying metric space captures domain specific knowledge, \eg non-linear constraints, which is available \emph{a priori}. The intrinsic geodesic metric encodes this knowledge, often leading to improved statistical models. 

A seemingly straightforward approach to statistics in metric spaces is to use kernel methods~\cite{learningwithkernels}, designing kernels $k(x, y)$ which only rely on geodesic distances $d(x, y)$ between observations~\cite{chapelle1999support}:
\begin{align} \label{eq:geod_kerneldef}
  k(x, y) &= \exp\left( -\lambda (d(x, y))^q \right), \qquad \lambda, q > 0.
\end{align}
For $q = 2$ this gives a geodesic generalization of the \emph{Gaussian kernel}, and $q = 1$ gives the geodesic \emph{Laplacian kernel}. While this idea has an appealing similarity to familiar Euclidean kernel methods, we show that it is highly limited if the metric space is curved. 

Positive definiteness of a kernel $k$ is critical for the use of kernel methods such as support vector machines or kernel PCA. In this paper, we analyze exponential kernels on geodesic metric spaces and show the following results, summarized in Table~\ref{tab:overview}.
\begin{packed_itemize}
  \item The \emph{geodesic Gaussian kernel} is positive definite (PD) for all $\lambda > 0$ only if the underlying metric space is flat (Theorem~\ref{Xisflat}). In particular, when the metric space is a Riemannian manifold, the geodesic Gaussian kernel is PD for all $\lambda > 0$ if and only if the manifold is Euclidean (Theorem~\ref{isomeucl}). This negative result implies that Gaussian kernels cannot be generalized to any non-trivial Riemannian manifolds of interest.
  \item The \emph{geodesic Laplacian kernel} is PD if and only if the metric is conditionally negative definite (Theorem~\ref{thm:PDlaplacian}). This condition is not generally true for metric spaces, but it holds for a number of spaces of interest. In particular, the geodesic Laplacian kernel is PD on spheres, hyperbolic spaces, and Euclidean spaces (Table~\ref{tab:PD_or_not}).
  \item For any Riemannian manifold $(M, g)$, the kernel~\eqref{eq:geod_kerneldef} will never be PD for all $\lambda > 0$ if $q > 2$ (Theorem~\ref{nothigherthan2}). 
\end{packed_itemize}
\begin{table}
\centering
\resizebox{0.5\columnwidth}{!}{
  \begin{tabular}{l|c|c}
                                                  & \multicolumn{2}{c}{\textbf{Extends to general}}           \\
    \footnotesize{\textbf{Kernel}}                & \textbf{Metric spaces} &   \textbf{Riemannian manifolds}  \\
    \hline
    \footnotesize{Gaussian ($q = 2$)}             & No (only if flat)    &   No (only if Euclidean)       \\
    \footnotesize{Laplacian ($q = 1$)}            & Yes, iff metric is CND &   Yes, iff metric is CND         \\
    \footnotesize{Geodesic exp. ($q > 2$)}        & Not known              &   No
  \end{tabular}
  }
  \caption{Overview of results: For a geodesic metric, when is the geodesic exponential kernel~\eqref{eq:geod_kerneldef} positive definite for all $\lambda > 0$?}
  \label{tab:overview}
\end{table}

Generalization of geodesic kernels to metric spaces is motivated by the
general lack of powerful machine learning techniques in these spaces. In that
regard, our first results are disappointing as they imply that
generalizing Gaussian kernels to metric spaces is \emph{not} a viable direction
forward. Intuitively, this is not surprising as kernel methods embed
the data in a linear space, which cannot be expected to capture the curvature
of a general metric space. Our second result is therefore a positive surprise:
it allows the Laplacian kernel to be applied in \emph{some} metric spaces, although this has strong implications for their geometric properties.
This gives hope that other kernels can be generalized, though our third
result indicates that the geodesic exponential kernels \eqref{eq:geod_kerneldef} have limited applicability on Riemannian manifolds. 

{\bf The paper is organized as follows.} We state our main results and discuss their consequences in Sec.~\ref{sec:mainres}, postponing proofs until Sec.~\ref{sec:mathsec}, which includes a formal discussion of the preliminaries. This section can be skipped in a first reading of the paper. Related work is discussed in detail in Sec.~\ref{sec:relatedwork}, where we also review recent approaches which do not conflict with our results. Sec.~\ref{sec:experiments} contains empirical experiments confirming and extending our results on manifolds that admit PD geodesic exponential kernels. The paper is concluded in Sec.~\ref{sec:discussion}.

\section{Main results and their consequences} \label{sec:mainres}
Before formally proving our main theorems, we state
the results and provide hints as to \emph{why} they hold. We start with a brief
review of \emph{metric geometry} and the notion of a \emph{flat space}, both
of which are fundamental to the results.



\begin{figure}
  \centering
  \includegraphics[width=0.4\columnwidth]{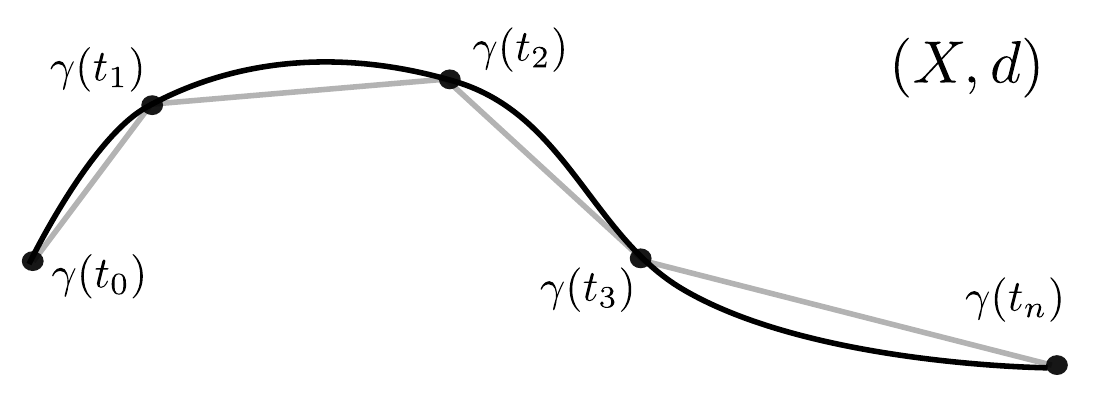}
  \vspace{-2mm}
  \caption{Path length in a metric space is defined as the supremum
    of lengths of finite approximations of the path.}
  \label{fig:path_length}
\end{figure}

In a general metric space $(X, d)$ with distance metric $d$, the \emph{length} $l(\gamma)$ of a path $\gamma \colon [0, L] \to X$ from $x$ to $y$ is defined as the smallest upper bound of any finite
approximation of the path (see Fig.~\ref{fig:path_length})
\[
l(\gamma) = \sup_{0 = t_0 < t_1 < \ldots < t_n = 1, n \in \N} \sum_{i = 1}^n d(t_{i-1}, t_i).
\]
A path $\gamma \colon [0, L] \to X$ is called a \emph{geodesic}~\cite{bridsonhaef} from $x$ to $y$ if $\gamma(0) = x$, $\gamma(L) = y$ and $d\left(\gamma(t), \gamma(t') \right) = |t-t'|$ for all $t, t' \in [0, L]$. In particular, $l(\gamma) = d(x,y) = L$ for a geodesic $\gamma$. In a Euclidean space, geodesics are straight lines. A geodesic from $x$ to $y$ will always be the shortest possible path from $x$ to $y$, but geodesics with respect to a given metric do not always exist, even if shortest paths do. An example is given later in Fig.~\ref{fig:chordal_fig}.

A metric space $(X, d)$ is called a \emph{geodesic space} if every pair $x, y \in X$ can be connected by a geodesic. Informally, a geodesic metric space is merely a space in which distances can be computed as lengths of geodesics, and data points can be interpolated via geodesics.


\emph{Riemannian manifolds} are a commonly used class of metric spaces. Here distances
are defined locally through a smoothly changing inner product in the tangent space. Intuitively, a Riemannian manifold can be thought of as a smooth surface (\eg a sphere)
with geodesics corresponding to shortest paths on the surface. A geodesic distance metric corresponding to the Riemannian structure is defined explicitly as the length of the geodesic joining two points. Whenever a Riemannian manifold is complete, it is a geodesic space.
This is the case for most manifolds of interest.

Many efficient machine learning algorithms are available in Euclidean spaces; their generalization to metric spaces is an open problem.
\emph{Kernel methods} form an immensely popular class of algorithms
including \emph{support vector machines} and \emph{kernel PCA}~\cite{learningwithkernels}.
These algorithms rely on the specification of a kernel $k(x, y)$, which embeds data
points $x, y$ in a linear Hilbert space and returns their inner product.
Kernel methods are very flexible, as they only require the computation of inner products (through the kernel). However, the kernel is only an inner product if it is PD, so kernel methods are only well-defined for kernels which are PD~\cite{learningwithkernels}.

Many popular choices of kernels for Euclidean data rely only on the Euclidean
distance between data points; for instance the widely used Gaussian kernel (given by \eqref{eq:geod_kerneldef} with $q=2$). Kernels which only rely on distances form an obvious target for generalizing kernel methods to metric spaces,
where distance is often the only quantity available.


\subsection{Main results}

In Theorem~\ref{Xisflat} of this paper we prove that {\bf geodesic Gaussian kernels
on metric spaces are PD for all $\lambda > 0$ only if the metric space is flat}.
Informally, a metric space is flat if it (for all practical purposes) is Euclidean.
More formally:

\begin{defn}
A geodesic metric space $(X, d)$ is \emph{flat in the sense of Alexandrov} if 
any geodesic triangle in $X$ can be isometrically embedded in a Euclidean space.
\end{defn}

Here, an embedding $f \colon X \to X'$ from a metric space $(X, d)$ to another metric space $(X', d')$ is \emph{isometric} if $d'\left(f(x), f(y) \right) = d(x,y)$ for all $x, y \in X$\footnote{The metric space definition of \emph{isometric embedding}~\cite{bridsonhaef}, which is used when distances are in focus, should not be confused with the definition of isometric embedding from Riemannian geometry, preserving \emph{Riemannian metrics} which are not distances, but tangent space inner products.}. A \emph{geodesic triangle} $abc$ in $X$ consists of three points $a$, $b$ and $c$ joined by geodesic paths $\gamma_{ab}$, $\gamma_{bc}$ and $\gamma_{ac}$. The concept of flatness essentially requires that all geodesic triangles
are \emph{identical} to Euclidean triangles; see Fig.~\ref{geodesic_fig}.

\begin{figure}
  \centering
  \includegraphics[width=0.5\columnwidth]{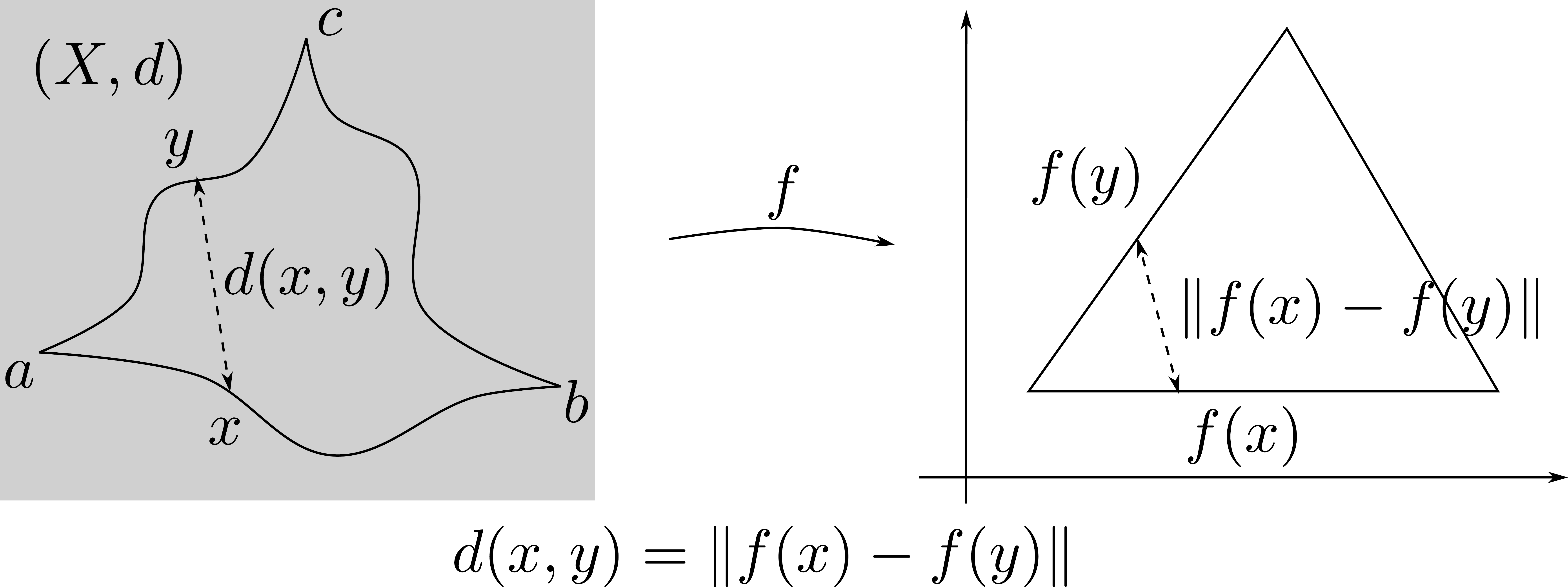}
  \caption{If any geodesic triangle in $(X,d)$ can be isometrically embedded
    into some Euclidean space, then $X$ is \emph{flat}. Note in particular that
    when a geodesic triangle is isometrically embedded in a Euclidean space, it
    is embedded onto a Euclidean triangle --- otherwise the geodesic edges would
    not be isometrically embedded.}
  \label{geodesic_fig}
\end{figure}

With this, we state our first main theorem:

\begin{thm} \label{Xisflat}
Let $(X, d)$ be a geodesic metric space, and assume that $k(x,y) = \exp(-\lambda d^2(x,y))$
is a PD geodesic Gaussian kernel on $X$ for all $\lambda > 0$. Then $(X, d)$ is flat in the sense of Alexandrov.
\end{thm}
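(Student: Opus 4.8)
The plan is to route the entire argument through Schoenberg's classical theory connecting positive definite kernels, conditionally negative definite functions, and Hilbert space embeddings. The hypothesis that $\exp(-\lambda d^2)$ is PD for every $\lambda > 0$ is, by Schoenberg's theorem, precisely equivalent to the statement that the squared distance $d^2$ is \emph{conditionally negative definite} (CND): for every finite collection of points $x_1,\dots,x_n \in X$ and real scalars $c_1,\dots,c_n$ with $\sum_i c_i = 0$, one has $\sum_{i,j} c_i c_j\, d^2(x_i,x_j) \le 0$. So the first step is simply to invoke this equivalence to upgrade the kernel hypothesis into a statement purely about $d^2$.

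Second, I would apply the complementary half of Schoenberg's theory: a (semi)metric space $(X,d)$ admits an isometric embedding $f \colon X \to \mathcal{H}$ into a Hilbert space $\mathcal{H}$ if and only if $d^2$ is CND. Since we have just established that $d^2$ is CND, we obtain a map $f$ with $\|f(x) - f(y)\|_{\mathcal{H}} = d(x,y)$ for all $x,y \in X$. This reduces the problem from an abstract geodesic space to the concrete and well-understood geometry of a Hilbert space.

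The crux is then to show that $f$ carries geodesics to straight line segments, so that geodesic triangles in $X$ map to genuine Euclidean triangles. Let $\gamma \colon [0,L] \to X$ be a geodesic edge, so that $d(\gamma(s),\gamma(t)) = |s-t|$ and hence $\|f(\gamma(s)) - f(\gamma(t))\| = |s-t|$. For any $s < t < u$ this forces equality in the triangle inequality, $\|f(\gamma(s)) - f(\gamma(u))\| = \|f(\gamma(s)) - f(\gamma(t))\| + \|f(\gamma(t)) - f(\gamma(u))\|$; because a Hilbert space is strictly convex (equality in the triangle inequality amounts to equality in Cauchy--Schwarz, which forces the two difference vectors to be nonnegatively proportional), the point $f(\gamma(t))$ must lie on the segment between $f(\gamma(s))$ and $f(\gamma(u))$. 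Thus $f \circ \gamma$ is an affinely parametrized straight segment. I expect this strict-convexity step to be the main technical obstacle, since it is what genuinely uses the Hilbert (rather than merely Banach) structure and is what converts a one-dimensional isometry into rigidity of the image curve.

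Finally, I would assemble these pieces. Given any geodesic triangle $abc$ in $X$, its three vertices map to $f(a),f(b),f(c) \in \mathcal{H}$, which span an affine subspace $V$ of dimension at most two; this $V$ is isometric to a Euclidean space $\R^{\dim V}$. By the previous step each geodesic edge maps to the straight segment joining the corresponding pair of image vertices, and every such segment lies inside $V$. Hence the whole triangle, vertices together with edges, embeds isometrically onto a Euclidean triangle in $V \subseteq \mathcal{H}$. Since the triangle was arbitrary, every geodesic triangle in $X$ embeds isometrically in a Euclidean space, which is exactly flatness in the sense of Alexandrov.
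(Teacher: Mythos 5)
Your proposal is correct and follows essentially the same route as the paper's own proof: Sch\"onberg's equivalence turns the kernel hypothesis into conditional negative definiteness of $d^2$, the companion embedding theorem yields an isometric map $f \colon X \to H$, geodesics are shown to map to straight segments, and geodesic triangles therefore embed isometrically onto Euclidean triangles. If anything, your strict-convexity argument for why $f \circ \gamma$ must be a straight segment is spelled out more carefully than in the paper, which asserts this step with the phrase ``this is only possible if.''
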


This is a negative result, in the sense that most metric spaces of interest are
not flat. In fact, the motivation for generalizing kernel methods is to
cope with data residing in non-flat metric spaces.

As a consequence of Theorem~\ref{Xisflat}, we show that
{\bf geodesic Gaussian kernels on Riemannian manifolds are PD for all $\lambda > 0$ only if the Riemannian manifold is Euclidean}.

\begin{thm} \label{isomeucl}
Let $M$ be a complete, smooth Riemannian manifold with its associated geodesic distance metric $d$. Assume, moreover, that $k(x,y) = \exp(-\lambda d^2(x,y))$ is a PD geodesic Gaussian kernel for all $\lambda > 0$. Then the Riemannian manifold $M$ is isometric to a Euclidean space.
\end{thm}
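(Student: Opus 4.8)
The plan is to derive the statement from Theorem~\ref{Xisflat} together with standard Riemannian comparison geometry. First I would note that completeness of $M$ lets us invoke the Hopf--Rinow theorem, so any two points are joined by a minimizing geodesic and $(M,d)$ is a geodesic metric space. The hypothesis that $\exp(-\lambda d^2)$ is PD for all $\lambda>0$ is exactly the hypothesis of Theorem~\ref{Xisflat}, so that theorem applies and tells us $(M,d)$ is flat in the sense of Alexandrov: every geodesic triangle embeds isometrically onto a Euclidean triangle.

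Next I would convert Alexandrov flatness into an infinitesimal curvature statement. Isometric embeddability of \emph{every} geodesic triangle onto a Euclidean triangle means the Alexandrov comparison holds with equality in both directions, so the space is simultaneously CAT(0) (curvature bounded above by $0$) and of curvature bounded below by $0$. For a smooth Riemannian manifold these two comparisons pin the sectional curvature to $0$: applying Toponogov's theorem for the lower bound and the local CAT(0) property for the upper bound to arbitrarily small triangles forces $K \equiv 0$. Hence $M$ is a complete flat Riemannian manifold.

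The remaining, and crucial, step is to upgrade \emph{flat} to \emph{globally Euclidean}, ruling out flat quotients such as cylinders and tori, which are locally flat but not isometric to $\mathbb{R}^n$. Here I would use simple connectivity, which is precisely what the \emph{global} Alexandrov flatness supplies beyond mere local flatness: a complete CAT(0) space is contractible (the Cartan--Hadamard theorem in the CAT(0) setting), hence simply connected. A complete, simply connected, flat Riemannian $n$-manifold is isometric to $\mathbb{R}^n$ by the classification of space forms; equivalently, on such a manifold $\exp_p \colon T_pM \to M$ is a global isometry. Taking $n = \dim M$ gives the claim.

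I expect the main obstacle to be this last step. Zero sectional curvature alone is insufficient, so the argument must extract simple connectivity from the comparison geometry rather than from the curvature tensor, and one must be careful that ``geodesic'' in the metric sense used in the excerpt --- globally minimizing, with $d(\gamma(t),\gamma(t'))=|t-t'|$ --- is exactly what feeds both the CAT(0) conclusion and the space-form classification. A secondary technical point is justifying that Euclidean equality in the Alexandrov comparison transfers to the smooth curvature; this is routine via the comparison theorems but should be stated with care.
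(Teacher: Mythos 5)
Your proposal is correct and takes essentially the same route as the paper's proof: apply Theorem~\ref{Xisflat} to get Alexandrov flatness, convert this to vanishing sectional curvature via the Cartan--Alexandrov comparison (Theorem~\ref{curveq}), extract simple connectivity from $CAT(0)$ contractibility (Corollary~\ref{contractible}), and finish with the classification of complete, simply connected, flat manifolds (Theorem~\ref{sect_isomtoeucl}). The only minor variation is in pinning the curvature to zero: you invoke the (converse) Toponogov comparison for the lower bound, whereas the paper argues by contradiction, using continuity of sectional curvature to produce a neighborhood of curvature $\le \kappa' < 0$ in the Alexandrov sense, which is incompatible with flatness --- both arguments are standard and equivalent in effect.
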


{\bf These two theorems have several consequences.} The first and main consequence is that defining geodesic Gaussian kernels on Riemannian manifolds or other geodesic metric spaces has limited applicability as most spaces of interest are not flat. In particular, on Riemannian manifolds the kernels will generally only be PD if the original data space is Euclidean. In this case, nothing is gained by treating the data space as a Riemannian manifold, as it is perfectly described by the well-known Euclidean geometry, where many problems can be solved in closed form. In Sec.~\ref{sec:relatedwork} we re-interpret recent work which does, indeed, take place in Riemannian manifolds that turn out to be Euclidean.


%

Second, \emph{this result is not surprising:} Curvature cannot be captured by a flat space, and Sch\"{o}nberg's classical theorem (see Sec.~\ref{sec:kernels}) indicates a strong connection between PD Gaussian kernels and linearity of the distance measure used in the Gaussian kernel. This connection is made explicit by Theorems~\ref{Xisflat} and~\ref{isomeucl}.

The obvious next question is the extent to which these negative results depend on the choice $q=2$ in~\eqref{eq:geod_kerneldef}, which results in a Gaussian kernel. A recent result by Istas~\cite{istas} implies that for Riemannian manifolds, passing to a higher power $q > 2$ will \emph{never} lead to a PD kernel for all $\lambda > 0$:

\begin{thm} \label{nothigherthan2}
Let $M$ be a Riemannian manifold with its associated geodesic distance metric $d$, and let $q > 2$. Then there is some $\lambda > 0$ so that the kernel~\eqref{eq:geod_kerneldef} is not PD.
\end{thm}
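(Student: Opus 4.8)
The plan is to reduce the statement to a one-dimensional fact via Sch\"{o}nberg's theorem. By Sch\"{o}nberg's classical theorem (stated in Sec.~\ref{sec:kernels}), the kernel $\exp(-\lambda d^q(x,y))$ is PD for \emph{all} $\lambda > 0$ if and only if $d^q$ is conditionally negative definite (CND). Consequently, to produce a single $\lambda > 0$ for which the kernel fails to be PD, it suffices to show that $d^q$ is \emph{not} CND on $M$.

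First I would record that the CND property passes to subsets: if $d^q$ is CND on $M$, then for every subset $S \subseteq M$ the restriction $d^q|_{S\times S}$ is again CND, since the defining inequality involves only finitely many points and real weights summing to zero. I would then exploit the local Euclidean nature of geodesics. Fixing any point $p \in M$ and a unit-speed geodesic $\gamma\colon[0,\varepsilon]\to M$ issuing from $p$, chosen short enough to be minimizing (always possible via the exponential map, so completeness of $M$ is not required), one has $d(\gamma(s),\gamma(t)) = |s - t|$ for all $s,t\in[0,\varepsilon]$. Hence the restriction of $d^q$ to the image of $\gamma$ is exactly $|s-t|^q$, i.e. the $q$-th power of the Euclidean metric on the interval $[0,\varepsilon]$.

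The crux is then the classical Sch\"{o}nberg--L\'evy fact, which underlies Istas's result~\cite{istas}: the function $(s,t)\mapsto |s-t|^q$ is CND on $\R$ precisely when $0 < q \le 2$, and it fails to be CND for $q > 2$. Concretely, for $q > 2$ there exist finitely many reals $t_1,\dots,t_n$ and weights $c_1,\dots,c_n$ with $\sum_i c_i = 0$ but $\sum_{i,j} c_i c_j |t_i - t_j|^q > 0$. Because $|as - at|^q = a^q |s-t|^q$ for $a > 0$, I can rescale this configuration by a small factor $a$ so that all points land inside $[0,\varepsilon]$; the violating sum is merely multiplied by $a^q > 0$, so it stays positive. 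Transporting these points to $M$ via $\gamma$ contradicts the CND property of $d^q$ restricted to the geodesic, and hence the assumed CND property of $d^q$ on $M$.

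Therefore $d^q$ cannot be CND, and by Sch\"{o}nberg's theorem there must be some $\lambda > 0$ for which $\exp(-\lambda d^q)$ is not PD, which is the claim. The only nontrivial ingredient is the one-dimensional negative-definiteness threshold at $q = 2$; the Riemannian geometry collapses, via short minimizing geodesics, entirely onto this single real-line obstruction. I expect the main point requiring care to be the verification that the scaling argument genuinely embeds the finite counterexample into an arbitrarily short geodesic segment, so that the contradiction is obtained purely from local data and no global or completeness hypothesis on $M$ is needed.
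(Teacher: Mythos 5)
Your proof is correct, and it takes a genuinely different route from the paper's: the paper's entire proof of Theorem~\ref{nothigherthan2} is the one-line citation of \cite[Theorem~2.12]{istas}, whereas you reconstruct the mechanism behind that result. Your chain of reductions is sound: CND restricts to subsets; near any point of $M$ there is a unit-speed minimizing geodesic $\gamma\colon[0,\varepsilon]\to M$ (no completeness needed), along which $d^q$ is exactly $|s-t|^q$; the failure of CND of $|s-t|^q$ on $\R$ for $q>2$ is witnessed by finitely many points with weights summing to zero, which $q$-homogeneity and translation invariance let you shrink into $[0,\varepsilon]$; and Sch\"onberg's Theorem~\ref{thm:schonberg} turns non-CND-ness of $d^q$ into the existence of some $\lambda\ge 0$ for which $e^{-\lambda d^q}$ is not PD, and this $\lambda$ must be positive because $\lambda=0$ gives the constant kernel $1$, which is PD. The only ingredient you quote as a black box---the one-dimensional threshold at $q=2$---could be made explicit in one line, making the proof fully self-contained: the points $0,1,2$ with weights $(1,-2,1)$ give $\sum_{i,j}c_ic_j|t_i-t_j|^q = 2(2^q-4) > 0$ for $q>2$; alternatively, were $|s-t|^q$ CND, Theorem~\ref{thm:exist_f} would produce $f\colon\R\to H$ with $\|f(s)-f(t)\|_H = |s-t|^{q/2}$, violating the triangle inequality since $q/2>1$. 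As for what each approach buys: the paper's citation is short and leans on Istas's general theory of fractional indices of metric spaces, but it is opaque; your argument is elementary, makes visible that the obstruction is purely local (so no global hypotheses on $M$ are needed), and in fact applies verbatim to any geodesic metric space with more than one point, since such a space contains an isometric copy of a nondegenerate interval by definition---which would strengthen the ``Not known'' entry for $q>2$ in Table~\ref{tab:overview} to a ``No'' for geodesic metric spaces, something the citation of \cite{istas}, being specific to Riemannian manifolds, does not give.
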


The existence of a $\lambda > 0$ such that the kernel is not PD may seem innocent; however, this implies that the kernel bandwidth parameter cannot be learned.

In contrast, the choice $q=1$ in~\eqref{eq:geod_kerneldef}, giving a geodesic Laplacian kernel, leads to a more positive result: The geodesic Laplacian kernel will be positive definite if and only if the distance $d$ is \emph{conditionally negative definite} (CND). CND metrics have linear embeddability properties analogous to those of PD kernels; see Sec.~\ref{sec:kernels} for formal definitions and properties. This provides a PD kernel framework which can, for several popular Riemannian data manifolds, take advantage of the geodesic distance.

\begin{thm} \label{thm:PDlaplacian}
\begin{itemize}
\item[i)] The geodesic distance $d$ in a geodesic metric space $(X, d)$ is CND if and only if the corresponding geodesic Laplacian kernel is PD for all $\lambda > 0$. 
\item[ii)] In this case, the square root metric $d_{\sqrt{\ }}(x,y) = \sqrt{d(x,y)}$ is also a distance metric, and $(X, d_{\sqrt{\ }})$ can be isometrically embedded as a metric space into a Hilbert space $H$. 
\item[iii)] The square root metric $d_{\sqrt{\ }}$ is not a geodesic metric, and $d_{\sqrt{\ }}$ corresponds to the chordal metric in $H$, not the intrinsic metric on the image of $X$ in $H$.
\end{itemize}
\end{thm}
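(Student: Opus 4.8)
The plan is to obtain all three parts from the classical theory of conditionally negative definite functions (Schoenberg's theorem, available from Sec.~\ref{sec:kernels}), supplemented by one elementary metric-geometry argument for the non-geodesic claim. For part (i), I would specialize Schoenberg's theorem to $\psi = d$: since $d$ is symmetric and satisfies $d(x,x)=0$, it is a legitimate candidate for a CND function, and Schoenberg's equivalence states precisely that such a $\psi$ is CND if and only if $\exp(-\lambda\psi)$ is PD for every $\lambda>0$. As the geodesic Laplacian kernel is $k(x,y)=\exp(-\lambda d(x,y))$, part (i) is immediate.

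For part (ii), I would first verify that $d_{\sqrt{\ }}=\sqrt{d}$ is a metric. Non-negativity, symmetry and the identity of indiscernibles are inherited from $d$, while the triangle inequality follows from the subadditivity of the square root:
\[
\sqrt{d(x,z)} \le \sqrt{d(x,y) + d(y,z)} \le \sqrt{d(x,y)} + \sqrt{d(y,z)},
\]
using the triangle inequality for $d$ in the first step. For the isometric embedding I would invoke the companion Schoenberg result: a metric $\rho$ embeds isometrically into a Hilbert space if and only if $\rho^2$ is CND. Applying this with $\rho=\sqrt{d}$, we have $\rho^2=d$, which is CND by part (i); hence $(X,d_{\sqrt{\ }})$ embeds isometrically into some Hilbert space $H$ via a map $f$ with $\|f(x)-f(y)\|_H=\sqrt{d(x,y)}$.

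Part (iii) is where the only genuine argument is needed, and it is an elementary but crucial computation exploiting the \emph{strict} subadditivity of the square root. Suppose $x\neq y$ and that $z$ lies on a geodesic of $(X,d_{\sqrt{\ }})$ joining them, so the defining additivity forces $\sqrt{d(x,z)}+\sqrt{d(z,y)}=\sqrt{d(x,y)}$. Writing $a=d(x,z)$, $b=d(z,y)$, $c=d(x,y)$ and squaring yields $a+b+2\sqrt{ab}=c$, whereas the triangle inequality for $d$ gives $c\le a+b$; together these force $\sqrt{ab}\le 0$, so $a=0$ or $b=0$, i.e. $z\in\{x,y\}$. Thus no point strictly between $x$ and $y$ can satisfy the additivity, so whenever $X$ contains two distinct points there is no geodesic between them and $d_{\sqrt{\ }}$ is not a geodesic metric. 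Finally, the embedding of part (ii) realizes $d_{\sqrt{\ }}$ as the \emph{chordal} (ambient straight-line) distance $\|f(x)-f(y)\|_H$; since $d_{\sqrt{\ }}$ is not geodesic it cannot coincide with the intrinsic length metric on $f(X)$, which is a length metric and hence fundamentally distinct from the straight-line distance.

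The main obstacle is recognizing that the failure of the geodesic property is forced precisely by the tension between the triangle inequality of $d$ and the strict concavity of $t\mapsto\sqrt{t}$; once this is isolated, everything else reduces to the two Schoenberg theorems together with routine metric-space bookkeeping.
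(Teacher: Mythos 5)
Your proof is correct, and for parts (i) and (ii) it follows essentially the paper's route: both specialize Sch\"{o}nberg's theorem (Theorem~\ref{thm:schonberg}) to $\psi = d$ for (i), and both invoke the Hilbert-space embedding theorem for CND kernels (Theorem~\ref{thm:exist_f}) for (ii). A minor difference in (ii): you verify the triangle inequality for $d_{\sqrt{\ }}$ directly from subadditivity of $\sqrt{\cdot}$, whereas the paper derives metricity as a by-product of the embedding (it checks that $f$ is injective and observes that $d_{\sqrt{\ }}$ is the restriction of the Hilbert norm metric to $f(X)$); both work, and yours has the small advantage that metricity of $\sqrt{d}$ is seen to be independent of the CND hypothesis. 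The genuine divergence is part (iii). The paper assumes $d_{\sqrt{\ }}$ is geodesic, applies Lemma~\ref{geo2geo} to map a $d_{\sqrt{\ }}$-geodesic to a straight line segment in $H$, and gets a contradiction because $|t-t'|^2$ fails the triangle inequality on $[0,L]$; your argument is purely intrinsic: additivity of $d_{\sqrt{\ }}$ along a geodesic plus strict subadditivity of the square root forces every intermediate point to coincide with an endpoint. Your version is more elementary (it uses neither the embedding nor the CND hypothesis, only that $d$ is a metric) and proves a strictly more general fact --- the square root of \emph{any} metric admits no nontrivial geodesics, the so-called snowflake phenomenon --- whereas the paper's version buys economy by recycling machinery already built for Theorem~\ref{Xisflat}. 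One shared soft spot: the final clause, that $d_{\sqrt{\ }}$ is not the \emph{intrinsic} metric on $f(X)$, does not formally follow from non-geodesicity alone, since a length metric need not be geodesic (cf.\ the punctured plane); your closing sentence is exactly as informal as the paper's own treatment of this clause, the fully rigorous argument being that every non-constant path has infinite length with respect to $d_{\sqrt{\ }}$, so the induced intrinsic metric on $f(X)$ is infinite between distinct points while the chordal metric is finite.
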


In Theorem~\ref{thm:PDlaplacian}, for $\phi \colon X \to H$, the \emph{chordal metric} $\|\phi(x) - \phi(y)\|_H$ measures distances directly in $H$ rather than intrinsically in the image $\phi(X) \subset H$, see also Fig.~\ref{fig:chordal_fig}.

\begin{SCfigure}[2][t]
\includegraphics[width=0.2\linewidth]{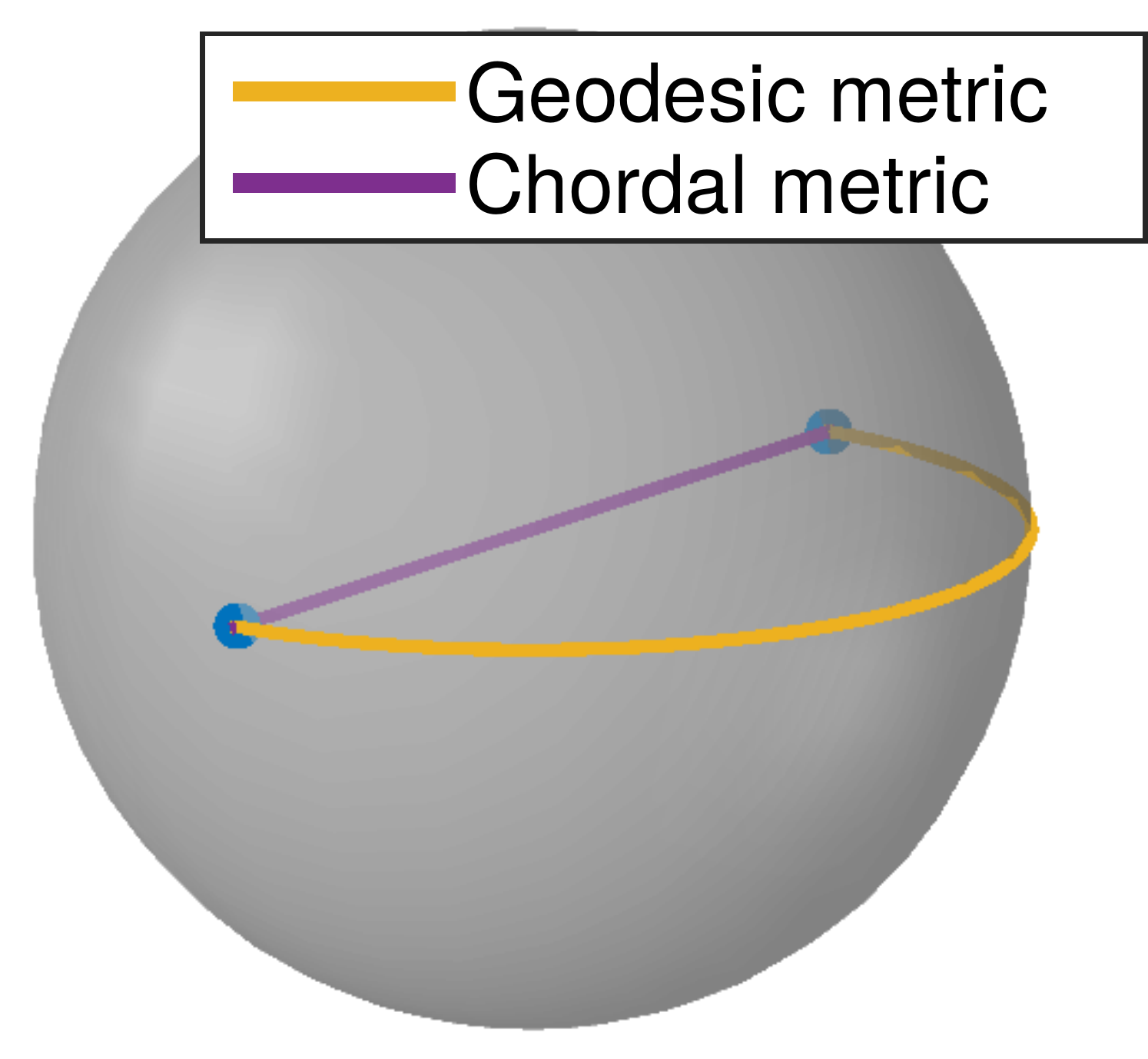}
\caption{The chordal metric on $\mathbb{S}^2 \subset \R^3$ is measured directly in $\R^3$, while the geodesic metric is measured along $\mathbb{S}^2$. Shortest paths with respect to the two metrics coincide, but the chordal metric is \emph{not} a geodesic metric, and the shortest path is \emph{not} a geodesic for the chordal metric, because the shortest path between two points is longer than their chordal distance.}
\label{fig:chordal_fig}
\end{SCfigure}

In Sec.~\ref{sec:relatedwork} we discuss several popular data spaces for which geodesic Laplacian kernels are PD (see Table~\ref{tab:PD_or_not}); examples include spheres, hyperbolic spaces and more. Nevertheless, we see from part \emph{ii)} of Theorem~\ref{thm:PDlaplacian} that any geodesic metric space whose geodesic Laplacian kernel is always PD must necessarily have strong linear properties: Its square root metric is isometrically embeddable in a Hilbert space.

This illustrates an intuitively simple point: A PD kernel has no choice but to linearize the data space. Therefore, its ability to capture the original data space geometry is deeply connected to the linear properties of the original metric\footnote{Another curious connection between kernels and curvature is found in~\cite{burges1999}, which shows that Gaussian and polynomial kernels on $\R^n$ and $\R^2$, respectively, have flat feature space images $\phi(\R^n)$ and $\phi(\R^2)$.}.
%

\section{Proofs of main results} \label{sec:mathsec}

In this section we prove the main results of the paper; this section may be skipped
in a first reading of the paper. In the first two subsections
we review and discuss classical geometric results on kernels, manifolds and curvature, which we will use to prove the main results.

\subsection{Kernels}\label{sec:kernels}

A modern and comprehensive treatment of the classical results on PD and CND kernels referred to here, can be found in~\cite[Appendix C]{kazhdan}.

\begin{defn}
A positive definite (PD) kernel on a topological space $X$ is a continuous function
\[
k \colon X \times X \to \R
\]
such that for any $n \in N$, any elements $x_1, \ldots, x_n \in X$ and any numbers $c_1, \ldots, c_n \in \R$, we have
\[
\sum_{i=1}^n \sum_{j=1}^n c_i c_j k(x_i, x_j) \ge 0.
\]
\end{defn}

\begin{defn}
A conditionally negative definite (CND) kernel on a topological space $X$ is a continuous function
\[
\psi \colon X \times X \to \R
\]
which satisfies
\begin{packed_itemize}
\item[i)] $\psi(x,x) = 0$ for all $x \in X$
\item[ii)] $\psi(x,y) = \psi(y,x)$ for all $x, y \in X$
\item[iii)] for any $n \in \N$, any elements $x_1, \ldots, x_n \in X$ and any real numbers $c_1, \ldots, c_n$ with $\sum_{i=1}^n c_i = 0$, we have
\[
\sum_{i=1}^n \sum_{j=1}^n c_i c_j \psi(x_i, x_j) \le 0.
\]
\end{packed_itemize}
\end{defn}

\begin{ex}
If $d \colon H \times H \to \R$ is the metric induced by the norm on a Hilbert space $H$, then the map $d^2 \colon H \times H \to \R$ given by $d^2(x,y) = (d(x,y))^2$ is a CND kernel~\cite{kazhdan}.
\end{ex}

The following two theorems are key to understanding the connection between distance metrics and their corresponding exponential kernels.

\begin{thm}[Due to Sch\"onberg \cite{schonberg}, Theorem C.3.2 in \cite{kazhdan}]
If $X$ is a topological space and $\psi \colon X \times X \to \R$ is a continuous kernel on $X$ such that $\psi(x,x) = 0$ and $\psi(y,x) = \psi(x,y)$ for all $y, x \in X$, then the following two properties are equivalent:
\begin{packed_itemize}
\item $\psi$ is a CND kernel  \label{thm:schonberg}
\item the kernel $k(x,y) = e^{-\lambda \psi(x,y)}$ is PD for all $\lambda \ge 0$. \qed
\end{packed_itemize}
\end{thm}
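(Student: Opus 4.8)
The statement is a biconditional, so the plan is to prove the two implications separately. Both are finite-dimensional in nature: continuity of $\psi$ plays no role beyond guaranteeing that $k$ is a genuine kernel, so the content is essentially algebraic. For the direction ``$k_\lambda := e^{-\lambda\psi}$ is PD for all $\lambda \ge 0$ $\Rightarrow$ $\psi$ is CND'', I would differentiate at $\lambda = 0$. Fix points $x_1,\dots,x_n$ and reals $c_1,\dots,c_n$ with $\sum_i c_i = 0$. Positive definiteness of $k_\lambda$ gives $\sum_{i,j} c_i c_j e^{-\lambda\psi(x_i,x_j)} \ge 0$ for every $\lambda \ge 0$. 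Expanding $e^{-\lambda\psi} = 1 - \lambda\psi + O(\lambda^2)$ and using $\sum_{i,j} c_i c_j = \bigl(\sum_i c_i\bigr)^2 = 0$ to annihilate the constant term, the leading contribution is $-\lambda \sum_{i,j} c_i c_j \psi(x_i,x_j)$. Dividing by $\lambda$ and letting $\lambda \to 0^+$ yields $\sum_{i,j} c_i c_j \psi(x_i,x_j) \le 0$, which is exactly conditional negative definiteness.

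The converse ``$\psi$ CND $\Rightarrow$ $k_\lambda$ PD for all $\lambda \ge 0$'' is the substantive direction, and I would split it into two steps. The first is a base-point lemma: fix any $x_0 \in X$ and set $\phi(x,y) = \tfrac12\bigl(\psi(x,x_0) + \psi(x_0,y) - \psi(x,y)\bigr)$; I claim $\phi$ is PD. To see this, take arbitrary reals $a_1,\dots,a_n$, put $a_0 = -\sum_{i=1}^n a_i$ so that $\sum_{i=0}^n a_i = 0$, and apply the CND inequality for $\psi$ to the enlarged point set $x_0,x_1,\dots,x_n$ with coefficients $a_0,\dots,a_n$. Expanding the resulting double sum, using $\psi(x_0,x_0)=0$ and the symmetry of $\psi$ to symmetrize the cross terms involving $x_0$, the whole expression reassembles into $-2\sum_{i,j} a_i a_j \phi(x_i,x_j)$, so the CND inequality becomes $\sum_{i,j} a_i a_j \phi(x_i,x_j) \ge 0$. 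Hence $\phi$ is PD.

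The second step is to exponentiate. Writing $g(x) = \psi(x,x_0)$, the identity $\psi(x,y) = g(x) + g(y) - 2\phi(x,y)$ gives
\[
 e^{-\lambda\psi(x,y)} = e^{-\lambda g(x)}\,e^{-\lambda g(y)}\,e^{2\lambda\phi(x,y)}.
\]
The factor $e^{2\lambda\phi}$ is PD because the Schur product theorem shows every Hadamard power $\phi^{k}$ of a PD kernel is PD, so the series $e^{2\lambda\phi} = \sum_{k\ge 0} \tfrac{(2\lambda)^k}{k!}\,\phi^{k}$ is a nonnegative combination of PD kernels. Multiplying by the rank-one factor $h(x)h(y)$ with $h(x) = e^{-\lambda g(x)}$ preserves positive definiteness, since $\sum_{i,j} c_i c_j h(x_i)h(x_j)K(x_i,x_j) = \sum_{i,j}\bigl(c_i h(x_i)\bigr)\bigl(c_j h(x_j)\bigr)K(x_i,x_j) \ge 0$ for any PD kernel $K$. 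Therefore $k_\lambda = e^{-\lambda\psi}$ is PD for every $\lambda \ge 0$.

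I expect the main obstacle to be the base-point lemma: the bookkeeping that recombines the CND sum over the augmented set $\{x_0,x_1,\dots,x_n\}$ into $\sum_{i,j} a_i a_j \phi(x_i,x_j)$ must be carried out carefully, tracking the symmetrization of the $x_0$-cross terms and the cancellation of the $\psi(x_0,x_0)$ term. The only other non-elementary ingredient is the Schur product theorem, giving closure of PD kernels under pointwise products and hence under $\exp$; once that is invoked, the remaining steps are routine.
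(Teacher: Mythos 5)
Your proof is correct, but there is no internal argument in the paper to compare it against: the paper states this result as a classical theorem, attributing it to Sch\"onberg and citing Theorem C.3.2 of Kazhdan's book, and the statement is closed with a tombstone --- no proof is given. What you have written is, in essence, the standard proof from exactly that literature: the easy direction by a first-order expansion of $e^{-\lambda\psi}$ at $\lambda=0$, using $\sum_{i,j}c_ic_j=\bigl(\sum_i c_i\bigr)^2=0$ to kill the constant term and then letting $\lambda\to 0^+$; and the substantive direction via the base-point lemma that $\phi(x,y)=\tfrac12\bigl(\psi(x,x_0)+\psi(x_0,y)-\psi(x,y)\bigr)$ is PD, followed by the factorization $e^{-\lambda\psi(x,y)}=e^{-\lambda g(x)}\,e^{2\lambda\phi(x,y)}\,e^{-\lambda g(y)}$ with $g(x)=\psi(x,x_0)$, the Schur product theorem for $e^{2\lambda\phi}$, and the rank-one multiplication step. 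I verified the bookkeeping you flagged as the main risk: with $a_0=-\sum_{i\ge 1}a_i$, the augmented CND sum reduces (using $\psi(x_0,x_0)=0$ and symmetry) to $\sum_{i,j\ge 1}a_ia_j\psi(x_i,x_j)-2\sum_{i,j\ge 1}a_ia_j\psi(x_i,x_0)$, which is precisely $-2\sum_{i,j}a_ia_j\phi(x_i,x_j)$, so the lemma holds as claimed. Two minor points of hygiene: in the first direction you should remark that the point set is finite, so the $O(\lambda^2)$ remainder is uniform over the finitely many pairs and the limit is justified; and for completeness note that $e^{2\lambda\phi}$ is PD as a \emph{pointwise limit} of the partial sums $\sum_{k\le N}\tfrac{(2\lambda)^k}{k!}\phi^k$, positive definiteness being preserved under such limits. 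Neither is a gap, just wording.
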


\begin{thm}[Part of Theorem C.2.3 in \cite{kazhdan}] \label{thm:exist_f}
If $\psi \colon X \times X \to \R$ is a CND kernel on a topological space $X$, then there exists a real Hilbert space $H$ and a continuous mapping
\[
f \colon X \to H
\]
such that $\psi(x,y) = \|f(x) - f(y)\|_H^2$ for all $x, y \in X$. \qed
\end{thm}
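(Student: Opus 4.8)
The plan is to reduce the statement to the classical construction of a Hilbert space from a positive definite kernel, after first manufacturing such a kernel out of the CND kernel $\psi$. Fix any base point $x_0 \in X$ and define
\[
\phi(x,y) = \tfrac{1}{2}\bigl( \psi(x, x_0) + \psi(y, x_0) - \psi(x,y) \bigr).
\]
This $\phi$ is continuous and symmetric (since $\psi$ is), and the heart of the argument is to show that $\phi$ is a \emph{PD} kernel in the sense of the first definition. Given points $x_1, \ldots, x_n$ and arbitrary reals $c_1, \ldots, c_n$, the obstacle is that the defining inequality for a CND kernel applies only to coefficient vectors summing to zero, whereas positive definiteness of $\phi$ must hold for \emph{all} coefficient vectors. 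This mismatch is the only genuinely delicate point of the proof.

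I would overcome it by augmenting the point set with the base point. Setting $c_0 = -\sum_{i=1}^n c_i$ and applying the CND inequality to the enlarged system $x_0, x_1, \ldots, x_n$ with coefficients $c_0, c_1, \ldots, c_n$, whose sum is now zero, gives $\sum_{i,j=0}^n c_i c_j \psi(x_i, x_j) \le 0$. Separating out the index $0$ and using $\psi(x_0,x_0) = 0$ together with symmetry yields the bound
\[
\sum_{i,j=1}^n c_i c_j \psi(x_i, x_j) \;\le\; 2\Bigl(\sum_i c_i\Bigr) \sum_j c_j\, \psi(x_j, x_0).
\]
Substituting this into the expansion of $\sum_{i,j} c_i c_j \phi(x_i, x_j)$, in which the two base-point sums combine to exactly $2(\sum_i c_i)\sum_j c_j \psi(x_j,x_0)$, shows that $\sum_{i,j} c_i c_j \phi(x_i, x_j) \ge 0$. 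Hence $\phi$ is PD.

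With $\phi$ in hand, I would invoke the standard Moore--Aronszajn construction. Form the vector space of finitely supported formal combinations $\sum_i c_i \delta_{x_i}$, equip it with the semi-inner product $\langle \sum_i c_i \delta_{x_i}, \sum_j d_j \delta_{y_j} \rangle = \sum_{i,j} c_i d_j\, \phi(x_i, y_j)$, which is positive semidefinite precisely because $\phi$ is PD, then quotient by its null space and complete. This produces a real Hilbert space $H$ together with the map $f(x) = [\delta_x]$ satisfying $\langle f(x), f(y)\rangle_H = \phi(x,y)$.

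Finally I would close the argument by a direct computation. From the definition of $\phi$ one reads off $\phi(x,x) = \psi(x,x_0)$, so that
\[
\|f(x) - f(y)\|_H^2 = \phi(x,x) - 2\phi(x,y) + \phi(y,y) = \psi(x,y),
\]
the base-point terms cancelling exactly. Continuity of $f$ is then immediate, since $\|f(x') - f(x)\|_H^2 = \psi(x', x) \to \psi(x,x) = 0$ as $x' \to x$ by continuity of $\psi$. The substantive work is thus concentrated in the positive definiteness of $\phi$; the Hilbert-space completion and the final identities are routine bookkeeping.
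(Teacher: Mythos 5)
Your proof is correct: the delicate step --- establishing positive definiteness of $\phi(x,y) = \tfrac{1}{2}\bigl(\psi(x,x_0) + \psi(y,x_0) - \psi(x,y)\bigr)$ by augmenting the coefficient vector with $c_0 = -\sum_i c_i$ and invoking $\psi(x_0,x_0)=0$ --- is carried out accurately, and the Moore--Aronszajn completion, the identity $\|f(x)-f(y)\|_H^2 = \phi(x,x) - 2\phi(x,y) + \phi(y,y) = \psi(x,y)$, and the continuity argument via $\|f(x')-f(x)\|_H^2 = \psi(x',x) \to 0$ are all sound. The paper itself states this theorem without proof, citing \cite{kazhdan}, and your argument is exactly the classical construction given in that reference, so you have faithfully reconstructed the standard proof rather than found a different route.
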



From the above, it is straightforward to deduce:

\begin{cor}
If the geodesic Gaussian kernel is PD, then there exists a mapping $f \colon X \to H$ into some Hilbert space $H$ such that
\[
d(x,y) = \|f(x) - f(y)\|_H
\]
for each $x, y \in X$. Note that this mapping $f$ is \emph{not} necessarily related to the feature mapping $\phi \colon X \to V$ such that $k(x,y) = \langle \phi(x), \phi(y) \rangle_V$.
\end{cor}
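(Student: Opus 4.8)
The plan is to apply the two classical theorems just stated --- Sch\"onberg's theorem and Theorem~\ref{thm:exist_f} --- in sequence, with the squared distance $d^2$ playing the role of the kernel $\psi$. First I would set $\psi(x,y) = d^2(x,y)$ and check that it satisfies the standing hypotheses of Sch\"onberg's theorem. Since $d$ is a metric, $\psi(x,x) = d^2(x,x) = 0$ for every $x \in X$, and $\psi(y,x) = d^2(y,x) = d^2(x,y) = \psi(x,y)$ by symmetry of $d$; continuity of $\psi$ follows from the (Lipschitz) continuity of the distance function. Thus $\psi$ is an admissible kernel for Sch\"onberg's theorem.

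Next I would read the hypothesis of the corollary as saying that the geodesic Gaussian kernel $k(x,y) = \exp(-\lambda d^2(x,y)) = e^{-\lambda \psi(x,y)}$ is PD for every $\lambda > 0$, consistent with the hypothesis of Theorem~\ref{Xisflat}. To invoke Sch\"onberg's theorem I need this property for all $\lambda \ge 0$, so I would dispatch the boundary case $\lambda = 0$ separately: there the kernel degenerates to the constant kernel $k \equiv 1$, whose associated quadratic form is $\left(\sum_i c_i\right)^2 \ge 0$, hence trivially PD. Therefore $e^{-\lambda \psi}$ is PD for all $\lambda \ge 0$, and Sch\"onberg's theorem lets me conclude that $\psi = d^2$ is a CND kernel. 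Applying Theorem~\ref{thm:exist_f} to this CND kernel then produces a real Hilbert space $H$ and a continuous map $f \colon X \to H$ with $d^2(x,y) = \psi(x,y) = \|f(x) - f(y)\|_H^2$ for all $x, y \in X$. Taking nonnegative square roots on both sides --- legitimate because $d(x,y)$ and $\|f(x) - f(y)\|_H$ are both nonnegative --- yields $d(x,y) = \|f(x) - f(y)\|_H$, which is the claimed embedding.

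I do not expect any genuine obstacle here, since the statement is a direct composition of the two cited theorems; indeed the paper flags it as a straightforward deduction. The only points demanding care are bookkeeping: verifying that $d^2$ meets the diagonal-vanishing and symmetry conditions required by Sch\"onberg's theorem, and bridging the gap between the ``$\lambda > 0$'' hypothesis and the ``$\lambda \ge 0$'' form of Sch\"onberg's statement through the trivial constant-kernel case at $\lambda = 0$. It is worth emphasizing that the map $f$ so obtained arises from the CND structure of $d^2$ and need not coincide with the feature map $\phi$ implementing $k$ as an inner product, exactly as the corollary's closing remark warns.
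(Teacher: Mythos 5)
Your proposal is correct and follows exactly the deduction the paper intends: the paper offers no written proof beyond ``from the above, it is straightforward to deduce,'' and your chain --- verify $\psi = d^2$ satisfies the diagonal and symmetry hypotheses, bridge the $\lambda > 0$ versus $\lambda \ge 0$ gap via the trivially PD constant kernel at $\lambda = 0$, apply Sch\"onberg's theorem to get that $d^2$ is CND, then apply Theorem~\ref{thm:exist_f} and take square roots --- is precisely that deduction, carefully spelled out. Your closing observation about $f$ versus the feature map $\phi$ matches the corollary's own remark.
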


\subsection{Curvature} \label{sec:curvature}

While curvature is usually studied using differential geometry, we shall access curvature via a more general approach that applies to general geodesic metric spaces. This notion of curvature, originating with Alexandrov and Gromov, operates by comparing the metric space to spaces whose geometry we understand well, referred to as \emph{model spaces}. The model spaces $M_\kappa$ are \emph{spheres} (of positive curvature $\kappa > 0$), the \emph{Euclidean plane} (flat, curvature $\kappa = 0$) and \emph{hyperbolic space} (negative curvature $\kappa < 0$). Since metric spaces can be pathological, curvature is approached by bounding the curvature of the space at a given point from above or below. The bounds are attained by comparing geodesic triangles in the metric space with triangles in the model spaces, as expressed in the \emph{$CAT(\kappa)$ condition}:

\begin{defn}
Let $(X, d)$ be a geodesic metric space $X$. Let $abc$ be a geodesic triangle of perimeter $< 2 D_\kappa$, where $D_\kappa$ is the diameter of $M_\kappa$,  that is, $D_\kappa = \infty$ for $\kappa \le 0$, and $D_\kappa = \frac{\pi}{\sqrt{\kappa}}$ for $\kappa > 0$. There exists a triangle $\bar{a}\bar{b}\bar{c}$ in the model space $M_\kappa$ with vertices $\bar{a}$, $\bar{b}$ and $\bar{c}$ and with geodesic edges $\bar{\gamma}_{\bar{a}\bar{b}}$, $\bar{\gamma}_{\bar{b}\bar{c}}$ and $\bar{\gamma}_{\bar{a}\bar{c}}$, whose lengths are the same as the lengths of the edges $\gamma_{ab}$, $\gamma_{bc}$ and $\gamma_{ac}$ in $abc$. This is an $M_\kappa$-\emph{comparison triangle} for $abc$ (see Fig.~\ref{cat0}).

For any point $x$ sitting on the segment $\gamma_{bc}$, there is a corresponding point $\bar{x}$ on the segment $\bar{\gamma}_{\bar{b} \bar{c}}$ in the comparison triangle, such that $d_{M_\kappa}(\bar{x},\bar{b}) = d(x,b)$. If we have 
\begin{equation} \label{cat0ineq}
d(x, a) \le d_{M_\kappa}(\bar{x},\bar{a})
\end{equation}
for every such $x$, and similarly for any $x$ on $\gamma_{ab}$ or $\gamma_{ac}$, then the geodesic triangle $abc$ satisfies the \emph{$CAT(\kappa)$ condition}.

The metric space $X$ is a $CAT(\kappa)$ space if any geodesic triangle $abc$ in $X$ of perimeter $< 2 D_\kappa$ satisfies the $CAT(\kappa)$ condition given in eq.~\ref{cat0ineq}. Geometrically, this means that triangles in $X$ are \emph{thinner} than triangles in $M_\kappa$.

The metric space $X$ has \emph{curvature $\le \kappa$ in the sense of Alexandrov} if it is locally $CAT(\kappa)$.
\end{defn}

\begin{figure}
\begin{center}
\includegraphics[width=0.7\linewidth]{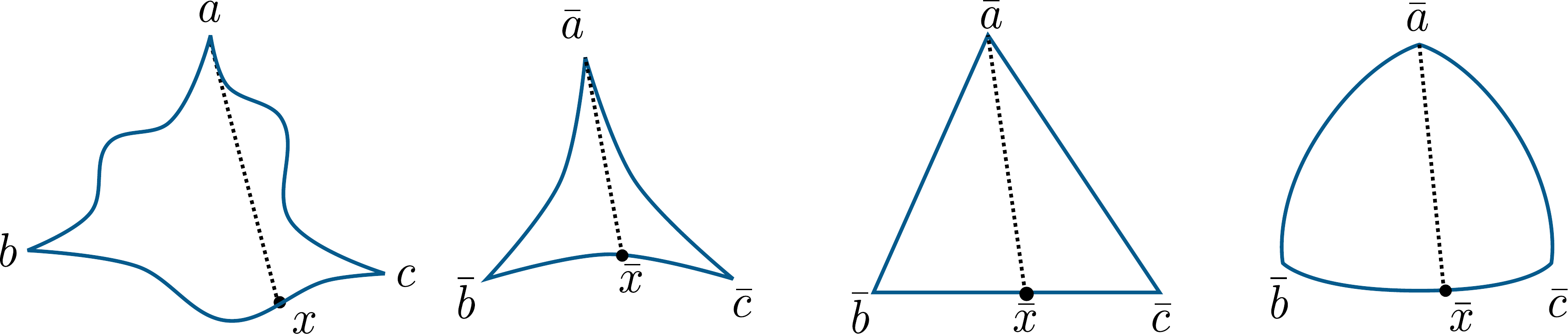}
\end{center}
\vspace{-3mm}
\caption{Left: A geodesic triangle, right: the corresponding comparison triangles in hyperbolic space $\mathbb{H}^2$, the plane $\R^2$ and the sphere $\mathbb{S}^2$, respectively.}
\vspace{-1mm}
\label{cat0}
\end{figure}

While curvature in the $CAT(\kappa)$ sense allows the study of curvature through the relatively simple means of geodesic distances alone, it is a weaker concept of curvature than the standard sectional curvature used in Riemannian geometry. Nevertheless, the two concepts are related, as captured by the following theorem due to Cartan and Alexandrov:

\begin{thm}[Theorem II.1A.6~\cite{bridsonhaef}] \label{curveq}
A smooth Riemannian manifold $M$ is of curvature $\le \kappa$ in the sense of Alexandrov if and only if the sectional curvature of $M$ is $\le \kappa$. \qed
\end{thm}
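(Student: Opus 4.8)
The plan is to prove both implications locally, exploiting the fact that both the sectional curvature bound and curvature $\le \kappa$ in the sense of Alexandrov (local $CAT(\kappa)$) are local conditions. I would fix $p \in M$ and work inside a geodesically convex normal ball $B = B(p,r)$ on which any two points are joined by a unique minimizing geodesic varying smoothly with its endpoints; such a ball exists by standard Riemannian geometry, and inside it I only need to consider geodesic triangles of perimeter $< 2D_\kappa$, so that $M_\kappa$-comparison triangles are guaranteed to exist.

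For the implication \emph{sectional curvature $\le \kappa$ $\Rightarrow$ $CAT(\kappa)$}, the main tool is the Rauch comparison theorem, which controls the spreading of geodesics via Jacobi fields. Along a geodesic $\gamma$, nearby geodesics diverge at a rate governed by the Jacobi equation $J'' + R(J,\dot\gamma)\dot\gamma = 0$, whose behaviour is dominated by the sectional curvature of the planes spanned by $J$ and $\dot\gamma$. When this curvature is $\le \kappa$, Jacobi fields grow \emph{at least} as fast as in the model space $M_\kappa$, so the exponential map of $M$ is metrically expanding relative to that of $M_\kappa$. Applying this to the distance function $x \mapsto d(a,x)$ along a side of a geodesic triangle, one bounds its convexity by that of the corresponding function in the comparison triangle, which yields precisely the thinness inequality $d(x,a) \le d_{M_\kappa}(\bar x, \bar a)$ defining the $CAT(\kappa)$ condition.

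For the converse, \emph{$CAT(\kappa)$ $\Rightarrow$ sectional curvature $\le \kappa$}, I would argue by contradiction using an infinitesimal expansion. Suppose the sectional curvature of some plane $\sigma \subset T_pM$ satisfies $K(\sigma) > \kappa$; by continuity $K > \kappa$ on a neighbourhood. I would then build geodesic triangles based at $p$ with two sides emanating in directions spanning $\sigma$, all side lengths scaled by a small parameter $\varepsilon$. The key computation is the Taylor expansion of the squared distance $c^2$ between points on the two sides: the leading term, of order $\varepsilon^2$, is the Euclidean law of cosines and therefore agrees with the model space, while the first discrepancy appears at order $\varepsilon^4$ with a coefficient proportional to $K(\sigma) - \kappa$. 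A strictly positive value of $K(\sigma) - \kappa$ makes the triangle in $M$ strictly \emph{fatter} than its $M_\kappa$-comparison triangle, contradicting the thinness inequality for $\varepsilon$ small enough.

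The step I expect to be the main obstacle is this converse direction, namely making the passage from the finite metric comparison to the pointwise curvature rigorous. One must ensure the expansion is uniform as the triangles shrink, identify the curvature coefficient at order $\varepsilon^4$ with the correct sign and its dependence on the angle between the two sides, and verify that the violation of thinness genuinely survives rather than being absorbed by higher-order error terms. The forward direction is comparatively standard once Rauch comparison is available, though even there one must control the Hessian of the distance function along an entire geodesic and handle triangles whose perimeter approaches the threshold $2D_\kappa$.
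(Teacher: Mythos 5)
Your proposal cannot be matched against an in-paper argument, because the paper gives none: Theorem~\ref{curveq} is quoted verbatim from Bridson--Haefliger (Theorem II.1A.6, the classical Cartan--Alexandrov theorem), with the terminal \qed marking it as an imported result; the paper only \emph{uses} it, as an ingredient in the proof of Theorem~\ref{isomeucl}. So the relevant comparison is with the standard textbook proof, and your outline is essentially that proof. The forward direction (sectional curvature $\le \kappa$ implies locally $CAT(\kappa)$) via Rauch comparison is the classical route: curvature bounded above by $\kappa$ makes Jacobi fields vanishing at a point grow at least as fast as in $M_\kappa$, hence $\exp_p$ is metrically expanding relative to the model, and this controls the convexity of the (suitably modified, for $\kappa \neq 0$) distance function along geodesics, which is exactly what the thinness inequality asks for. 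Your converse also checks out quantitatively: for unit vectors $v, w$ spanning $\sigma$ at angle $\alpha$, one has $d^2\bigl(\exp_p(sv), \exp_p(tw)\bigr) = s^2 + t^2 - 2st\cos\alpha - \tfrac{1}{3}K(\sigma)\, s^2 t^2 \sin^2\alpha + o(\varepsilon^4)$ for $s, t = O(\varepsilon)$, so after matching all three side lengths with an $M_\kappa$-comparison triangle, the distance from a vertex to an interior point of the opposite side differs from its comparison value at order $\varepsilon^4$ with coefficient of the sign of $K(\sigma) - \kappa$; since the error terms are uniform on a compact neighbourhood of $p$, a strict inequality $K(\sigma) > \kappa$ does yield a genuine violation of the $CAT(\kappa)$ inequality for small $\varepsilon$, which is the concern you flagged as the main obstacle.

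Two details are worth making explicit in your setup. First, the restriction to a convex normal ball is not merely convenient but necessary: curvature $\le \kappa$ in the sense of Alexandrov is by definition a \emph{local} $CAT(\kappa)$ condition, and the global statement is false (a flat torus has sectional curvature $0$ everywhere but is not $CAT(0)$, since geodesics between points are not unique). Second, in the forward direction the Rauch/Jacobi-field comparison is only valid up to the first conjugate point, and for $\kappa > 0$ the comparison triangle exists only for perimeter $< 2D_\kappa$; both issues are handled by taking the normal ball small enough, as you do. With those points in place, your proposal is a correct outline of the proof of the cited theorem.
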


The proof of the main theorem will, moreover, rely on the following theorem characterizing manifolds of constant zero sectional curvature:

\begin{thm}[Part of Theorem 11.12~\cite{lee}] \label{sect_isomtoeucl}
Let $M$ be a complete, simply connected $m$-dimensional Riemannian manifold with constant sectional curvature $C=0$. Then $M$ is isometric to $\R^m$. \qed
\end{thm}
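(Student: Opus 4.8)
The plan is to prove the stronger statement that for any basepoint $p \in M$, the Riemannian exponential map $\exp_p \colon T_pM \to M$ is a global isometry, where $T_pM \cong \R^m$ carries its flat inner-product metric. This reduces the problem to two independent facts: that $\exp_p$ is a diffeomorphism, and that its differential is a linear isometry at every point of $T_pM$. The first is a global topological statement; the second is an infinitesimal metric computation, and it is the latter where the vanishing of the curvature enters decisively.

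First I would establish that $\exp_p$ is a diffeomorphism. Since $M$ has constant sectional curvature $0$, its sectional curvature is in particular $\le 0$; together with completeness and simple-connectedness this places $M$ in the hypothesis of the Cartan--Hadamard theorem, which asserts that $\exp_p$ maps $T_pM$ diffeomorphically onto $M$. Completeness guarantees (via Hopf--Rinow) that $\exp_p$ is defined on all of $T_pM$, nonpositive curvature rules out conjugate points so that $\exp_p$ is a local diffeomorphism everywhere, and simple-connectedness upgrades the resulting covering map to a genuine diffeomorphism.

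Next I would show $\exp_p$ is a local isometry by analyzing Jacobi fields along radial geodesics. Fix $v \in T_pM$ and let $\gamma(t) = \exp_p(tv)$. For a tangent vector $w \in T_pM$, the field $J(t) = d(\exp_p)_{tv}(tw)$ is the Jacobi field along $\gamma$ with $J(0) = 0$ and $J'(0) = w$. Because the curvature tensor vanishes identically, the Jacobi equation collapses to
\[
J''(t) = 0,
\]
whose solution with these initial data is $J(t) = t\,P_t(w)$, where $P_t$ denotes parallel transport along $\gamma$ from $\gamma(0)$ to $\gamma(t)$. Since parallel transport is a linear isometry, $\|J(t)\| = t\,\|w\|$. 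The Gauss lemma handles the radial component (the direction of $v$), where $d(\exp_p)$ preserves length exactly and keeps radial and transverse directions orthogonal. Comparing with the corresponding linear Jacobi field $t\,w$ in the flat model $T_pM$, I conclude that $d(\exp_p)_{tv}$ preserves all inner products, i.e.\ it is a linear isometry.

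Combining the two steps, $\exp_p$ is a diffeomorphism whose differential is everywhere a linear isometry, hence a global Riemannian isometry from $(T_pM, \langle\cdot,\cdot\rangle_p) \cong \R^m$ onto $M$. I expect the second step to be the main obstacle: the diffeomorphism property follows cleanly from a standard global theorem, but extracting \emph{metric} equality away from $p$ demands the Jacobi-field computation, and the crucial simplification $J'' = 0$ is exactly what the constant-zero-curvature hypothesis provides. Any weakening of the curvature assumption would reinstate curvature terms in the Jacobi equation and destroy the exact length preservation.
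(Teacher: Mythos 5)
The paper offers no proof of this theorem: it is imported as a black box from the cited reference (Theorem 11.12 of Lee) and marked as quoted, so the only meaningful comparison is with the textbook proof that the citation points to. Your argument is correct, and it is essentially that proof --- the flat case of the Killing--Hopf classification of space forms. Cartan--Hadamard (completeness, simple connectedness, curvature $\le 0$) gives that $\exp_p$ is a diffeomorphism; identifying $d(\exp_p)_{tv}(tw)$ with the Jacobi field $J$ along $\gamma(t)=\exp_p(tv)$ satisfying $J(0)=0$, $J'(0)=w$, and using $J''=0$ (valid because constant sectional curvature $0$ forces the full curvature tensor to vanish), yields $J(t)=t\,P_t(w)$, hence $\|d(\exp_p)_{tv}(u)\|=\|u\|$ for every $u$; polarization then makes $d(\exp_p)$ a linear isometry everywhere, so the diffeomorphism $\exp_p$ is a global Riemannian isometry from $(T_pM,\langle\cdot,\cdot\rangle_p)\cong\mathbb{R}^m$ onto $M$. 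One minor simplification you could make: in the flat case the Gauss lemma is not needed as a separate ingredient, since the formula $J(t)=t\,P_t(w)$ holds for \emph{all} $w\in T_pM$, including the radial direction $w=v$ (where it reduces to $t\gamma'(t)$), so the Jacobi computation alone already gives norm preservation in every direction.
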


We are now ready to start proving our main theorems.


\subsection{Geodesic Gaussian kernels on metric spaces: Proof of Theorem~\ref{Xisflat}}

As in the statement of Theorem~\ref{Xisflat}, assume that the metric space $(X, d)$ is a geodesic space as defined in Sec.~\ref{sec:mainres}, and that $k(x,y) = e^{-\lambda d^2(x,y)}$ is a PD geodesic Gaussian kernel on $X$ for all $\lambda > 0$. 

An important consequence of Theorem~\ref{thm:exist_f} is that the map $f \colon X \to H$ must take geodesic segments to geodesic segments, which in $H$ are straight line segments. 

\begin{lem} \label{geo2geo}
If $\gamma \colon [0, L] \to X$ is a geodesic of length $L$ from $a = \gamma(0)$ to $b = \gamma(L)$ in $X$, then $f(\gamma([0, L]))$ is the straight line from $f(a)$ to $f(b)$ in $H$, and 
\begin{equation} \label{geo_img}
f\left( \gamma(t)\right) = f(a) + \frac{t}{L}\left(f(b)-f(a) \right)
\end{equation}
for all $t \in [0, L]$.
\end{lem}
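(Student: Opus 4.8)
The plan is to reduce the statement to a fact about isometric embeddings of intervals into a Hilbert space, and then to a one-line inner-product computation. The Corollary immediately preceding the lemma (obtained from Sch\"onberg's theorem and Theorem~\ref{thm:exist_f}) already furnishes a map $f \colon X \to H$ with $d(x,y) = \|f(x) - f(y)\|_H$ for all $x, y$; that is, $f$ is distance-preserving. I would set $F := f \circ \gamma \colon [0, L] \to H$ and transport the geodesic condition on $\gamma$ through $f$: since $d(\gamma(t), \gamma(t')) = |t - t'|$, distance-preservation gives
\[
\|F(t) - F(t')\|_H = |t - t'| \qquad \text{for all } t, t' \in [0, L].
\]
In particular $\|F(L) - F(0)\|_H = L$, consistent with $\|f(b) - f(a)\|_H = d(a,b) = L$.

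Next I would exploit that in a Hilbert space all pairwise inner products of a finite configuration are determined by its pairwise distances via polarization. Fixing $t \in [0,L]$, set $u := F(t) - F(0)$ and $w := F(L) - F(0)$, so $\|u\|_H = t$ and $\|w\|_H = L$. Expanding $\|F(L) - F(t)\|_H^2 = \|w - u\|_H^2 = \|w\|_H^2 - 2\langle u, w\rangle_H + \|u\|_H^2$ and using $\|F(L)-F(t)\|_H = L - t$ yields $\langle u, w\rangle_H = Lt$. I would then verify~\eqref{geo_img} directly by showing that the squared norm of the difference between $F(t)$ and the claimed affine point vanishes:
\[
\Bigl\| F(t) - F(0) - \tfrac{t}{L}\, w \Bigr\|_H^2 = \|u\|_H^2 - \tfrac{2t}{L}\langle u, w\rangle_H + \tfrac{t^2}{L^2}\|w\|_H^2 = t^2 - 2t^2 + t^2 = 0.
\]
Since $F(0) = f(a)$ and $w = f(b) - f(a)$, this is exactly $f(\gamma(t)) = f(a) + \tfrac{t}{L}(f(b) - f(a))$, and letting $t$ range over $[0,L]$ shows the image $f(\gamma([0,L]))$ is the straight segment from $f(a)$ to $f(b)$.

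There is no serious obstacle here once the reduction is made: the only conceptual content is that distance-preservation carries the geodesic $\gamma$ to an arc-length isometric embedding of an interval into $H$, and that such an embedding of an interval into an inner-product space is rigidly an affine segment. A more geometric alternative to the computation above is to invoke the equality case of the triangle inequality $\|u\|_H + \|w - u\|_H = \|w\|_H$, which in a strictly convex space forces $u$ and $w - u$ to be non-negative multiples of a common vector and hence places $F(t)$ on the segment between $F(0)$ and $F(L)$; the only point needing care there is justifying collinearity from strict convexity, whereas the direct computation sidesteps this entirely. Note that the argument uses only the inner-product structure of $H$ and the distance identities, so it holds verbatim in the general geodesic metric space of Theorem~\ref{Xisflat}, with no appeal to smoothness or completeness.
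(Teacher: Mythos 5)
Your proof is correct, and at the top level it follows the same reduction as the paper: transport the geodesic condition through the distance-preserving map $f$ supplied by the Corollary, and then argue that an arc-length-parametrized distance-preserving image of an interval in a Hilbert space must be an affine segment. The difference lies in how that last rigidity step is handled. The paper's own proof records only the distances from one endpoint, $\|f(\gamma(0)) - f(\gamma(t))\|_H = d(\gamma(0),\gamma(t)) = t$, and then simply asserts ``this is only possible if $f\circ\gamma$ is the straight line from $f(a)$ to $f(b)$.'' As literally stated that justification is insufficient: knowing each $f(\gamma(t))$ lies on the sphere of radius $t$ about $f(\gamma(0))$ does not force collinearity; one also needs the distances to the other endpoint (or between arbitrary pairs of points on the curve). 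Your argument supplies exactly the missing ingredient: from $\|F(t)-F(0)\|_H = t$, $\|F(L)-F(t)\|_H = L-t$ and $\|F(L)-F(0)\|_H = L$ you extract $\langle F(t)-F(0),\, F(L)-F(0)\rangle_H = Lt$ by polarization and verify $\bigl\|F(t)-F(0)-\tfrac{t}{L}\bigl(F(L)-F(0)\bigr)\bigr\|_H^2 = 0$, which yields equation~\eqref{geo_img} directly rather than as a parametrization remark appended afterwards. So your proposal is best viewed as a completed, self-contained version of the paper's sketch: the paper's proof is shorter but leans on an unproved (and, as phrased, not quite correct) claim, while yours makes the equality-case rigidity explicit with a two-line inner-product computation and avoids any appeal to strict convexity or smoothness.
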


\begin{proof}
Since $\gamma \colon [0, L] \to X$ is a geodesic, it contains every point $\gamma (t)$ for all $t \in [0, L]$, and since $\gamma$ is a geodesic of length $L$, we have $d\left(\gamma(0), \gamma (t) \right) = t$ for each $t \in [0,L]$, so
\[
\| f\left(\gamma(0) \right) - f\left( \gamma (t) \right) \| = d\left(\gamma(0), \gamma (t) \right) = t.
\]
This is only possible if $f \circ \gamma$ is the straight line from $f(a)$ to $f(b)$ in $H$. Equation~\eqref{geo_img} follows directly, as it is the geodesic parametrization of a straight line from $f(a)$ to $f(b)$.
\end{proof}

This enables us to prove Theorem~\ref{Xisflat}:

\begin{proof}[Proof of Theorem~\ref{Xisflat}]
Let $a, b, c \in X$ be three points in $X$ and form a geodesic triangle spanned by their joining geodesics $\gamma_{ab}, \gamma_{bc}$ and $\gamma_{ca}$. Then the points $f(a), f(b)$ and $f(c)$ in $H$ are connected by straight line geodesics $f \circ \gamma_{ab}, f \circ \gamma_{bc}$ and $f \circ \gamma_{ca}$ by Lemma~\ref{geo2geo}. These points and geodesics in $H$ span a $2$-dimensional linear subspace of $H$ in which they form a Euclidean comparison triangle.

Without loss of generality, pick any two points $x$ and $y$ on the geodesic triangle and measure the distance $d(x, y)$. The corresponding distance in the comparison triangle is $\|f(x) - f(y)\|$, and by the definition of $f$ we know that $d(x,y) = \|f(x) - f(y)\|$, so the geodesic triangle is isometrically embedded into the comparison triangle. Hence, $X$ is flat in the sense of Alexandrov.
\end{proof}

\begin{cor} \label{contractible}
The metric space $X$ is contractible, and hence simply connected.
\end{cor}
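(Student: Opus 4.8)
The plan is to exploit the isometric embedding $f \colon X \to H$ furnished by the corollary to Theorem~\ref{thm:exist_f}, which satisfies $d(x,y) = \|f(x) - f(y)\|_H$ for all $x,y \in X$. Since $f$ preserves distances, it is injective (if $f(x) = f(y)$ then $d(x,y) = 0$, forcing $x = y$) and a homeomorphism onto its image $f(X) \subseteq H$. Contractibility being a topological property, it therefore suffices to prove that the subset $f(X)$ of the Hilbert space $H$ is contractible.

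The key step is to show that $f(X)$ is \emph{convex}. Take any two points $p = f(a)$ and $q = f(b)$ in $f(X)$. Because $X$ is a geodesic space, there is a geodesic $\gamma \colon [0,L] \to X$ from $a$ to $b$, and by Lemma~\ref{geo2geo} the composite $f \circ \gamma$ traces out exactly the straight line segment from $p$ to $q$ in $H$, with $f(\gamma(t)) = p + \tfrac{t}{L}(q - p)$. Hence the entire segment $[p,q]$ lies in $f(X)$, so $f(X)$ is convex.

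Once convexity is established, contractibility is immediate: fixing a basepoint $p_0 = f(x_0)$, the straight-line homotopy $F(z,t) = (1-t)\,z + t\,p_0$ is continuous, stays inside $f(X)$ by convexity, and deforms the identity map of $f(X)$ to the constant map at $p_0$. Thus $f(X)$, and therefore $X$, is contractible. Finally, every contractible space is simply connected, which yields the second assertion.

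The argument is essentially routine once Lemma~\ref{geo2geo} is in hand; the only point requiring care is the observation that the image of a geodesic is the \emph{full} straight segment, not merely some curve joining the endpoints. This is precisely what upgrades $f(X)$ from path-connected to convex, and hence makes the straight-line homotopy available. Transporting the contraction back to $X$ through the homeomorphism $f$ then completes the proof.
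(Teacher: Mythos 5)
Your proof is correct, but it takes a genuinely different route from the paper. The paper's proof is a two-line appeal to comparison geometry: by Theorem~\ref{Xisflat} the space $X$ is flat, hence in particular $CAT(0)$, and then contractibility follows by citing an external result (Corollary II.1.5 in Bridson--Haefliger) that every $CAT(0)$ space is contractible. You instead stay entirely inside the toolkit the paper has already built: the embedding $f \colon X \to H$ with $d(x,y) = \|f(x)-f(y)\|_H$, together with Lemma~\ref{geo2geo}, shows that the image of every geodesic of $X$ is the \emph{full} straight segment between the endpoint images, so $f(X)$ is a convex subset of the Hilbert space $H$; since $f$ is a distance-preserving bijection onto its image, it is a homeomorphism onto $f(X)$, and the straight-line homotopy contracts $f(X)$, hence $X$. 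Both arguments are sound. What the paper's approach buys is brevity and a framing ($X$ is $CAT(0)$) that is reused immediately afterwards in the proof of Theorem~\ref{isomeucl}, where curvature $\le 0$ in the sense of Alexandrov is needed. What your approach buys is self-containedness --- no citation to $CAT(0)$ theory is required --- and in fact a slightly stronger conclusion: $X$ is homeomorphic (indeed isometric, after taking the square of the ambient distance into account) to a \emph{convex} subset of a Hilbert space, which trivially gives contractibility and simple connectedness. Your closing caveat is also the right one to flag: the argument hinges on Lemma~\ref{geo2geo} delivering the entire segment, not merely some arc, and that is exactly what equation~\eqref{geo_img} provides.
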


\begin{proof}
By Theorem~\ref{Xisflat}, $X$ must necessarily be a $CAT(0)$, and contractible by~\cite[Corollary II.1.5]{bridsonhaef}.
\end{proof}


\subsection{Geodesic Gaussian kernels on Riemannian manifolds: Proof of Theorem~\ref{isomeucl}}

As in the statement of Theorem~\ref{isomeucl}, assume that $M$ is a complete, smooth Riemannian manifold with associated geodesic distance metric $d$, and that $k(x,y) = e^{-\lambda d^2(x,y)}$ is a geodesic Gaussian kernel which is PD for all $\lambda > 0$. We prove that then, the Riemannian manifold $M$ is isometric to a Euclidean space.

\begin{proof}[Proof of Theorem~\ref{isomeucl}]
We start out by showing that the sectional curvature of $M$ is $0$ everywhere.

By Theorem~\ref{Xisflat}, $M$ is a $CAT(0)$ space, so in particular it has curvature $\le 0$ in the sense of Alexandrov. Therefore, by Theorem~\ref{curveq}, the sectional curvature of $M$ is $\le 0$. 

To prove the claim, we need to show that $M$ does not have any points with negative sectional curvature. To this end, assume that there is some point $p \in M$ such that the sectional curvature of $M$ at $p$ is $\kappa < 0$. Then, since sectional curvature on smooth Riemannian manifolds is continuous, there exists some neighborhood $U$ of $p$ and some $\kappa' < 0$ such that the sectional curvature in $U$ is $\le \kappa' < 0$. But then, by Theorem~\ref{curveq}, $U$ also has curvature $\le \kappa'$ in the sense of Alexandrov, which cannot hold due to Theorem~\ref{Xisflat}. It follows that the sectional curvature of $M$ at $p$ cannot be $\kappa < 0$; hence, the sectional curvature of $M$ must be everywhere $0$.

Since $M$ is simply connected by Corollary~\ref{contractible}, we apply Theorem~\ref{sect_isomtoeucl} to conclude that $M$ must be isometric to $\R^m$.
\end{proof}

\subsection{The case $q > 2$}

\begin{proof}[Proof of Theorem~\ref{nothigherthan2}]
This is a direct consequence of~\cite[Theorem~2.12]{istas}.
\end{proof}

\subsection{Geodesic Laplacian kernels:\\ Proof of Theorem~\ref{thm:PDlaplacian}} \label{sec:laplacian}

Another consequence of Sch\"{o}nberg's Theorem~\ref{thm:schonberg} is that the geodesic Laplacian kernel defined by~\eqref{eq:geod_kerneldef} with $q=1$ will be PD if and only if the distance $d$ is CND. This provides a PD kernel framework which can, for several popular Riemannian data manifolds, take advantage of the geodesic distance.

\begin{proof}[Proof of Theorem~\ref{thm:PDlaplacian}]
\begin{packed_itemize}
\item[i)] Let $(X, d)$ be a geodesic metric space. By Theorem~\ref{thm:schonberg}, $d$ is CND if and only if the Laplacian kernel $k(x,y) = e^{-\lambda d(x,y)}$ is PD for all $\lambda > 0$.
\item[ii)] By Theorem~\ref{thm:exist_f}, there exists a real Hilbert space $H$ and a continuous map $f \colon X \to H$ such that
\begin{equation} \label{dist_eq}
d(x,y) = \|f(x) - f(y)\|_H^2 \textrm{ for all } x,y \in X.
\end{equation}
That is, $d_{\sqrt{\ }}(x,y) = \|f(x) - f(y)\|_H$ for all $x, y \in X$. The map $f$ must be injective, because if $f(x) = f(y)$ for $x \neq y$ then by~\eqref{dist_eq}, $0 = \|f(x) - f(y)\|_H = d(x,y) > 0$, which is false. Therefore, $d_{\sqrt{\ }}$ coincides with the restriction to $f(X)$ of the metric on $H$ induced by $\| \cdot \|$. Since the restriction of a metric to a subset is a metric, $d_{\sqrt{\ }}$ is a metric, and by definition, $f$ is an isometric embedding of $(X, d_{\sqrt{\ }})$ into $H$.
\item[iii)] Since $f$ is an isometric embedding as metric spaces, $d_{\sqrt{\ }}$ must correspond to the chordal metric in $H$. 

Assume that $d_{\sqrt{\ }}$ is a geodesic metric on $X$, then by Lemma~\ref{geo2geo}, $f$ maps geodesics in $(X, d_{\sqrt{\ }})$ to straight line segments in $H$. Focusing on a single geodesic segment $\gamma \colon [0, L] \to X$, we obtain
\[
d_{\sqrt{\ }}\left(\gamma(t), \gamma(t') \right) = \|f \circ \gamma(t) - f \circ \gamma(t') \| = |t-t'|
\]
for all $t, t' \in [0, L]$. Since $d = d_{\sqrt{\ }}^2$ is a metric by assumption, the square $d_\gamma(t, t') = |t-t'|^2 = d\left(\gamma(t), \gamma(t') \right)$ is a metric on $[0, L]$. But this is not true, as the triangle inequality fails to hold.

Therefore, $d_{\sqrt{\ }}$ cannot be a geodesic metric on $X$.
\end{packed_itemize}
\vspace{-0.45cm}
\end{proof}

As noted in Table~\ref{tab:PD_or_not} below, for a number of popular Riemannian manifolds, the geodesic distance metric is CND, meaning that geodesic Laplacian kernels are PD. 

\begin{rem}
For a CND distance metric $d \colon \mathcal{X} \times \mathcal{X} \to \R$, a second way of constructing a PD kernel $k \colon \mathcal{X} \times \mathcal{X} \to \R$ is through the formula $k(x,x') = d(x,x') - d(x,x_0) -d(x_0, x')$~\cite{scholkopf,berg}, where $x_0 \in \mathcal{X}$ is any point. For other kernels based on distances, such as the rational-quadratic kernel, little is known.
\end{rem}

%
%

\section{Implications for popular manifolds and related work} \label{sec:relatedwork}

\begin{table*}
\centering
\resizebox{\textwidth}{!}{
\begin{tabular}{c|c|ccccc}
Space & Distance metric & Geodesic & Euclidean? & CND &  PD Gaussian & PD Laplacian\\
 & & metric? & metric? & metric? & kernel? & kernel? \\
\hline
$\R^n$~\cite{schonberg,learningwithkernels} & Euclidean metric & \textcolor{green}{\checkmark} & \textcolor{green}{\checkmark} & \textcolor{green}{\checkmark} & \textcolor{green}{\checkmark} & \textcolor{green}{\checkmark}\\
$\R^n$, $n > 2$~\cite{istas} & $l_q$-norm $\| \cdot \|_q$, $q > 2$ & \textcolor{green}{\checkmark}  & \textcolor{red}{$\div$} & \textcolor{red}{$\div$} & \textcolor{red}{$\div$} & \textcolor{red}{$\div$}\\
Sphere $\mathbb{S}^n$~\cite{istas} & classical intrinsic & \textcolor{green}{\checkmark} & \textcolor{red}{$\div$} & \textcolor{green}{\checkmark} & \textcolor{red}{$\div$} & \textcolor{green}{\checkmark}\\
Real projective space $\mathbb{P}^n(\R)$~\cite{robertson} & classical intrinsic & \textcolor{green}{\checkmark} & \textcolor{red}{$\div$} & \textcolor{red}{$\div$} & \textcolor{red}{$\div$} & \textcolor{red}{$\div$}\\
Grassmannian & classical intrinsic & \textcolor{green}{\checkmark} & \textcolor{red}{$\div$} & \textcolor{red}{$\div$} & \textcolor{red}{$\div$} & \textcolor{red}{$\div$}\\ 
$Sym^+_d$ & Frobenius & \textcolor{green}{\checkmark} & \textcolor{green}{\checkmark} & \textcolor{green}{\checkmark} & \textcolor{green}{\checkmark} & \textcolor{green}{\checkmark}\\
$Sym^+_d$ & Log-Euclidean & \textcolor{green}{\checkmark} & \textcolor{green}{\checkmark} & \textcolor{green}{\checkmark} & \textcolor{green}{\checkmark} & \textcolor{green}{\checkmark}\\
$Sym^+_d$ & Affine invariant & \textcolor{green}{\checkmark} & \textcolor{red}{$\div$} & \textcolor{red}{$\div$} & \textcolor{red}{$\div$} & \textcolor{red}{$\div$}\\
$Sym^+_d$ & Fisher information metric & \textcolor{green}{\checkmark} & \textcolor{red}{$\div$} & \textcolor{red}{$\div$} & \textcolor{red}{$\div$} & \textcolor{red}{$\div$}\\
Hyperbolic space $\mathbb{H}^n$~\cite{istas} & classical intrinsic & \textcolor{green}{\checkmark} & \textcolor{red}{$\div$} & \textcolor{green}{\checkmark} & \textcolor{red}{$\div$} & \textcolor{green}{\checkmark}\\
$1$-dimensional normal distributions & Fisher information metric & \textcolor{green}{\checkmark} & \textcolor{red}{$\div$} & \textcolor{green}{\checkmark} & \textcolor{red}{$\div$} & \textcolor{green}{\checkmark}\\
Metric trees~\cite{valette_trees}, \cite[Thm 2.15]{istas} & tree metric & \textcolor{green}{\checkmark} & \textcolor{red}{$\div$} & \textcolor{green}{\checkmark} & \textcolor{red}{$\div$} & \textcolor{green}{\checkmark}\\
Geometric graphs (\eg $k$NN) & shortest path distance & \textcolor{green}{\checkmark} & \textcolor{red}{$\div$} & \textcolor{red}{$\div$} & \textcolor{red}{$\div$} & \textcolor{red}{$\div$}\\
Strings~\cite{cortes_rational} & string edit distance & \textcolor{green}{\checkmark} & \textcolor{red}{$\div$} & \textcolor{red}{$\div$} & \textcolor{red}{$\div$} & \textcolor{red}{$\div$}\\
Trees, graphs & tree/graph edit distance & \textcolor{green}{\checkmark} & \textcolor{red}{$\div$} & \textcolor{red}{$\div$} &  \textcolor{red}{$\div$} & \textcolor{red}{$\div$}\\
\end{tabular}}
\caption{For a set of popular metric and manifold data spaces and metrics, we record whether the metric is a geodesic metric, whether it is a Euclidean metric, whether it is a CND metric, and whether its corresponding Gaussian and Laplacian kernels are PD.}
\label{tab:PD_or_not}
\end{table*}

Many popular data spaces appearing in computer vision are \emph{not} flat, meaning that their geodesic distances are not CND and their geodesic Gaussian kernels will not be PD. Table~\ref{tab:PD_or_not} lists known results on CND status of some popular data spaces. In particular, the classical intrinsic metrics on $\R^n$, $\mathbb{H}^n$ and $\mathbb{S}^n$ are all CND\footnote{As a curious side note, this implies that $\sqrt{\|x-y\|}$ is a metric on $\R^n$.}. As the Fisher information metric on $1$-dimensional normal distributions defines the hyperbolic geometry $\mathbb{H}^2$~\cite{amari:book:2000}, it will give a CND geodesic metric. For projective space, on the other hand,~\cite{robertson} provides an example showing that the classical intrinsic metric is \emph{not} CND. As Grassmannians are generalizations of projective spaces, their geodesic metrics are therefore also not generally CND.

Symmetric, positive definite $(d \times d)$ matrices form another important data manifold, denoted $Sym^+_d$. While the popular Frobenius and Log-Euclidean~\cite{arsigny2005fast} metrics on $Sym^+_d$ are actually Euclidean, little is known theoretically about whether the geodesic distance metrics of non-Euclidean Riemannian metrics on $Sym^+_d$ are CND. In Sec.~\ref{sec:experiments} we show empirically that neither the affine-invariant metric~\cite{Pennec:IJCV:2006} nor the Fisher information metric on the corresponding fixed-mean multivariate normal distributions~\cite{amari:book:2000,atkinson1981rao} induce a CND geodesic metric. Note how the qualitatively similar affine-invariant and Log-Euclidean metrics differ in whether they generate PD exponential kernels.

Non-manifold data spaces are also popular, \eg the \emph{edit distance} on strings was shown not to be CND by Cortes \etal~\cite{cortes_rational}. As tree- and graph edit distances generalize string edit distance, the same holds for these. The metric along a metric tree, on the other hand, is CND. In Sec.~\ref{sec:experiments} we show empirically that this does not generalize to the shortest path metric on a geometric graph, such as the $k$NN  or $\epsilon$-neighborhood graphs often used in manifold learning~\cite{tenenbaum_global_2000,lleRoweis,belkin,alamgir}.

\subsection{Relation to previous work}

Several PD kernels on manifolds have appeared in the literature, some of them even Gaussian kernels based on distance metrics on manifolds such as spheres or Grassmannian manifolds, which we generally consider as curved manifolds. The reader might wonder how this is possible given the above presented results. The explanation is that the distances used in these kernels are \emph{not} geodesic distances and, in many cases, have little or nothing to do with the Riemannian structure of the manifold. We discuss a few examples.

\begin{ex}
In~\cite{jayasumana_cvpr2013}, a PD kernel is defined on $Sym^+_d$ by using a geodesic Gaussian kernel with the log-Euclidean metric \cite{Pennec:IJCV:2006}. The log-Euclidean metric is defined by pulling the (Euclidean) Frobenius metric on $Sym_d$ back to $Sym^+_d$ via the diffeomorphic matrix logarithm. Equivalently, data in $Sym^+_d$ is mapped into the Euclidean $Sym_d$ via the diffeomorphic log map, and data is analyzed there. The geodesic Gaussian kernel is PD because the Riemannian manifold is actually a Euclidean space. In such cases, the Riemannian framework only adds an unnecessary layer of complexity.
\end{ex}

\begin{ex}
In~\cite{jayasumana_cvpr2014}, radial kernels are defined on spheres $\mathbb{S}^n$ by restricting kernels on $\R^{n+1}$ to $\mathbb{S}^n$, giving radial kernels with respect to the chordal metric on $\mathbb{S}^n$. Due to the symmetry of $\mathbb{S}^n$, any kernel which is radial with respect to the chordal metric, will also be radial with respect to the geodesic metric on $\mathbb{S}^n$. This result is next used to define PD radial kernels on the Grassmannian manifold $\mathcal{G}^r_n$ and on the Kendall shape space $\mathcal{SP}^n$. However, these kernels are not radial with respect to the usual Riemannian metrics on these spaces, but with respect to the projection distance and the full Procrustes distance, respectively, both of which are \emph{not} geodesic distances with respect to any Riemannian metric on $\mathcal{G}^r_n$ and $\mathcal{SP}^n$, respectively\footnote{Assume that either of these metrics were a Riemannian geodesic distance metric. The family of PD radial kernels defined in~\cite{jayasumana_cvpr2014} on both $\mathcal{G}^r_n$ and $\mathcal{SP}^n$ include Gaussian kernels with the projection distance and the full Procrustes distance, respectively. By our previous results, if these were geodesic distances with respect to some Riemannian metric, this Riemannian metric would define a Euclidean structure on $\mathcal{G}^r_n$ and $\mathcal{SP}^n$, respectively. This is impossible, since these manifolds are both compact.}. These kernels, thus, have little to do with the Riemannian geometry of $\mathcal{G}^r_n$ and $\mathcal{SP}^n$.
\end{ex}

\begin{ex}
In~\cite{courty} it is noted that since the feature map $\phi$ corresponding to a Euclidean Gaussian kernel maps data onto a hypersphere $\mathbb{S}$ in the reproducing kernel Hilbert space $V$~\cite{learningwithkernels}, it might improve classification to consider the geodesic distance on $\mathbb{S}$ rather than the chordal distance from $V$. This is, however, done by projecting each $\phi(x) \in V$ onto the tangent space $T_{\phi(\tilde{x})} \mathbb{S}$ at a fixed base point $\phi(\tilde{x})$, where the linear kernel in $V$ is employed. This explains why the resulting kernel $k_{\tilde{x}}$ is PD: the kernel linearizes the sphere and, thereby, discards the spherical geometry.
\end{ex}

\begin{ex} 
In~\cite{honeine2010angular} and~\cite{jayasumana2013dicta}, geodesic Laplacian kernels are defined on spheres; as shown above, these are PD.
\end{ex}

\begin{ex}
In~\cite{jaakkola}, a kernel is defined on a general sample space $\mathcal{X}$ by selecting a generating probability distribution $P_\theta$ on $\mathcal{X}$ and defining a \emph{Fisher kernel} on $\mathcal{X}$. Denote by $M_\Theta$ the Riemannian manifold defined by a parametrized family of probability distributions $P_\theta$, $\theta \in \Theta$, on $\mathcal{X}$ endowed with the Fisher information metric. The kernel $k \colon \mathcal{X} \times \mathcal{X} \to \R$ is defined by mapping samples in $\mathcal{X}$ to the tangent space $T_\theta M_\Theta$ and applying the Riemannian metric at $P_\theta \in M_\Theta$. This is PD because the kernel is an inner product on data mapped into a Euclidean tangent space. Again, the statistical manifold is linearized and the resulting kernel does not fully respect its geometry.
\end{ex}

In several of these examples the data space is linearized by mapping to a tangent space or into a linear ambient space, which always gives a PD kernel. It should, however, be stressed that the resulting kernels neither respect the distances nor the constraints encoded in the original Riemannian structure. Thus, the linearization will inevitably remove the information that the kernel was aiming to encode.

In general, whenever a data space is embedded into a Euclidean/Hilbert space and the chordal metric is used in~\eqref{eq:geod_kerneldef}, this will give a PD kernel. In this way, by the Whitney embedding theorem~\cite{lee}, universal kernels can be defined on any manifold. These kernels will, however, disregard any constraints encoded by the geodesic distance. 

It is tempting to refer to the Nash theorem~\cite{nash}, which states that any Riemannian manifold can be isometrically embedded into a Euclidean space. Here, however, ''isometric embedding'' refers to a \emph{Riemannian} isometry, which preserves the \emph{Riemannian metric} (the smoothly changing inner product) --- not to be confused with a distance metric! Therefore, in a Riemannian isometric embedding $f \colon \mathcal{X} \to \R^n$ we typically have $d(x,y) \neq \|f(x) - f(y)\|$. A kernel based on chordal distances in a Nash embedding will, thus, not generally be related to the geodesic distance. 

Note, moreover, that the Nash theorem does not guarantee a unique embedding; in fact there are viable embeddings generating a wide range of distance metrics inherited from the ambient Euclidean space. Therefore, an exponential kernel based on the chordal metric will typically have little to do with the intrinsic Riemannian structure of the manifold.

There exist PD kernels that take full advantage of Riemannian geometry without relying on geodesic distances:

\begin{ex}
Gong \etal \cite{gong:cvpr:2012}
design a PD kernel for \emph{domain adaptation} using the geometry of the Grassmann
manifold: Let $S_1$ and $S_2$ be two low-dimensional subspaces of $\mathbb{R}^n$
estimated with PCA on two related data sets. This gives two points $x_1, x_2$ on the Grassmann
manifold. A test point can be projected into all possible subspaces along
the Grassmann geodesic connecting $x_1$ and $x_2$, giving an infinite dimensional
feature vector in a Hilbert space. Gong \etal \cite{gong:cvpr:2012} show how
to compute inner products in this Hilbert space in closed-form, thereby providing
a PD kernel which takes geometry into account without relying on geodesic distances. 
\end{ex}

\section{Experiments} \label{sec:experiments}
We now validate our theoretical results empirically. First, we generate $500$
randomly drawn symmetric PD matrices of size $3 \times 3$. We
compute the Gram matrix of both the Gaussian and Laplacian kernels under both
the affine-invariant metric~\cite{Pennec:IJCV:2006} and the Fisher information metric on the corresponding fixed-mean multivariate normal distributions
\cite{amari:book:2000,atkinson1981rao}. Fig.~\ref{fig:results}a
shows the eigenspectrum of the four different Gram matrices. All four kernels
have negative eigenvalues, which imply that none of them are positive definite. This empirically proves that neither the affine-invariant metric nor the Fisher information metric induce CND geodesic distance metrics in general, although we know this to hold for the Fisher information metric on $Sym^+_1 = \R_+$.


\begin{figure*}
  \centering
  \begin{tabular}{ccc}
    \footnotesize{(a) Symmetric positive definite matrices} &
    \footnotesize{(b) Unit sphere} &
    \footnotesize{(c) One-dimensional subspaces} \\
    \includegraphics[width=0.32\textwidth]{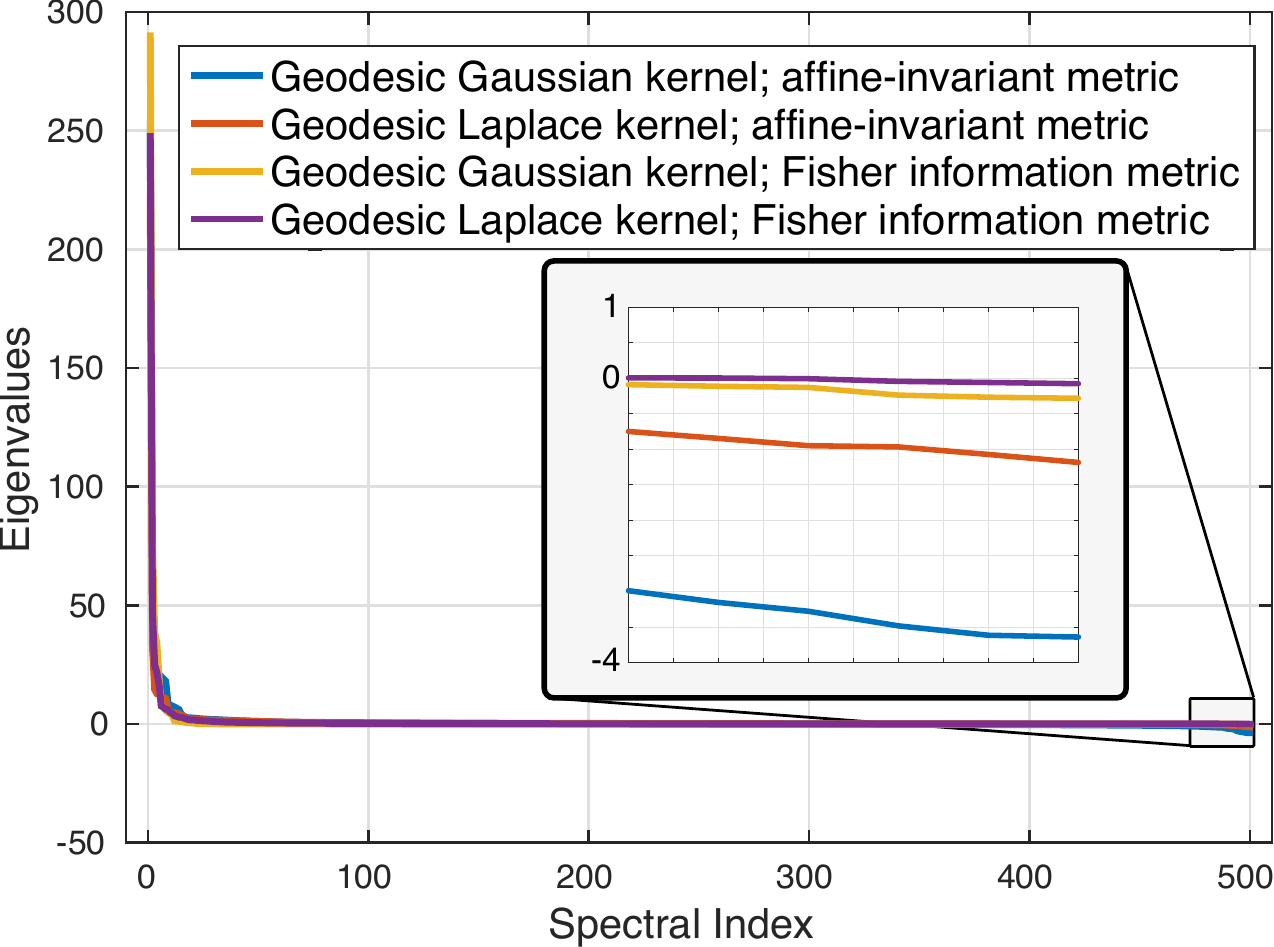} &
    \includegraphics[width=0.32\textwidth]{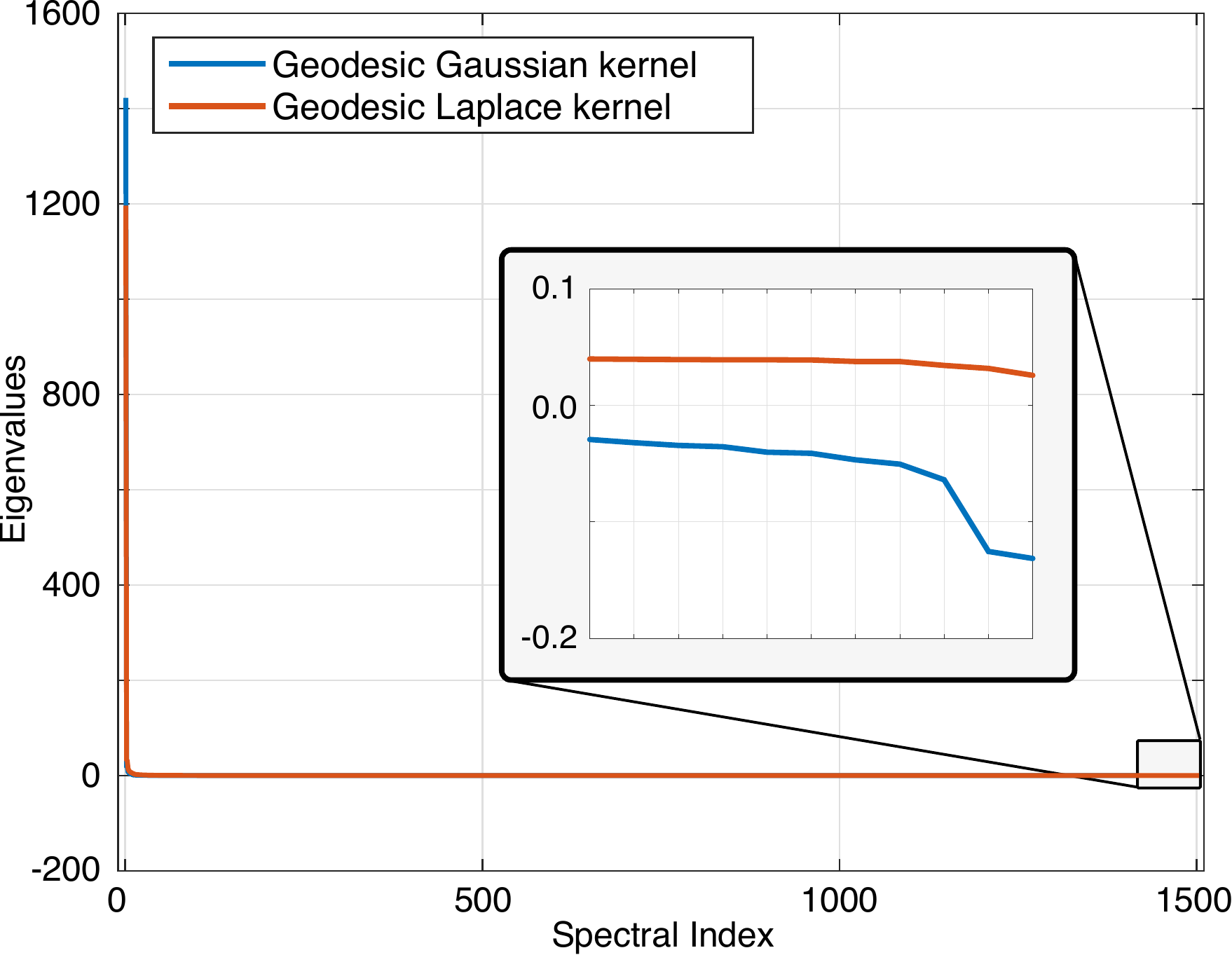} &
    \includegraphics[width=0.32\textwidth]{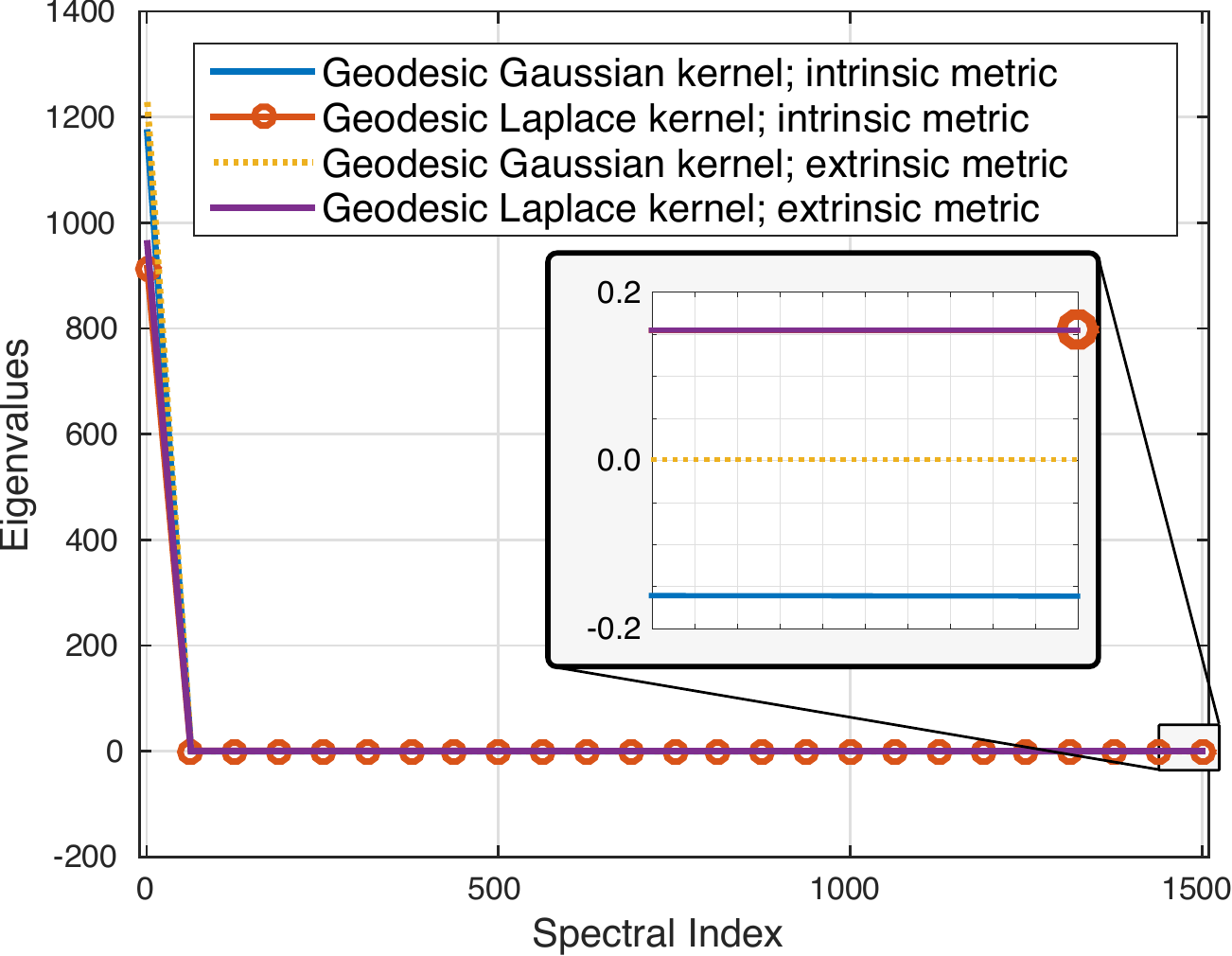} \\
    \footnotesize{(d) 15-dimensional subspaces of $\mathbb{R}^{100}$} &
    \footnotesize{(e) Nearest neighbor graph distances} &
    \footnotesize{(f) Example data} \\
    \includegraphics[width=0.32\textwidth]{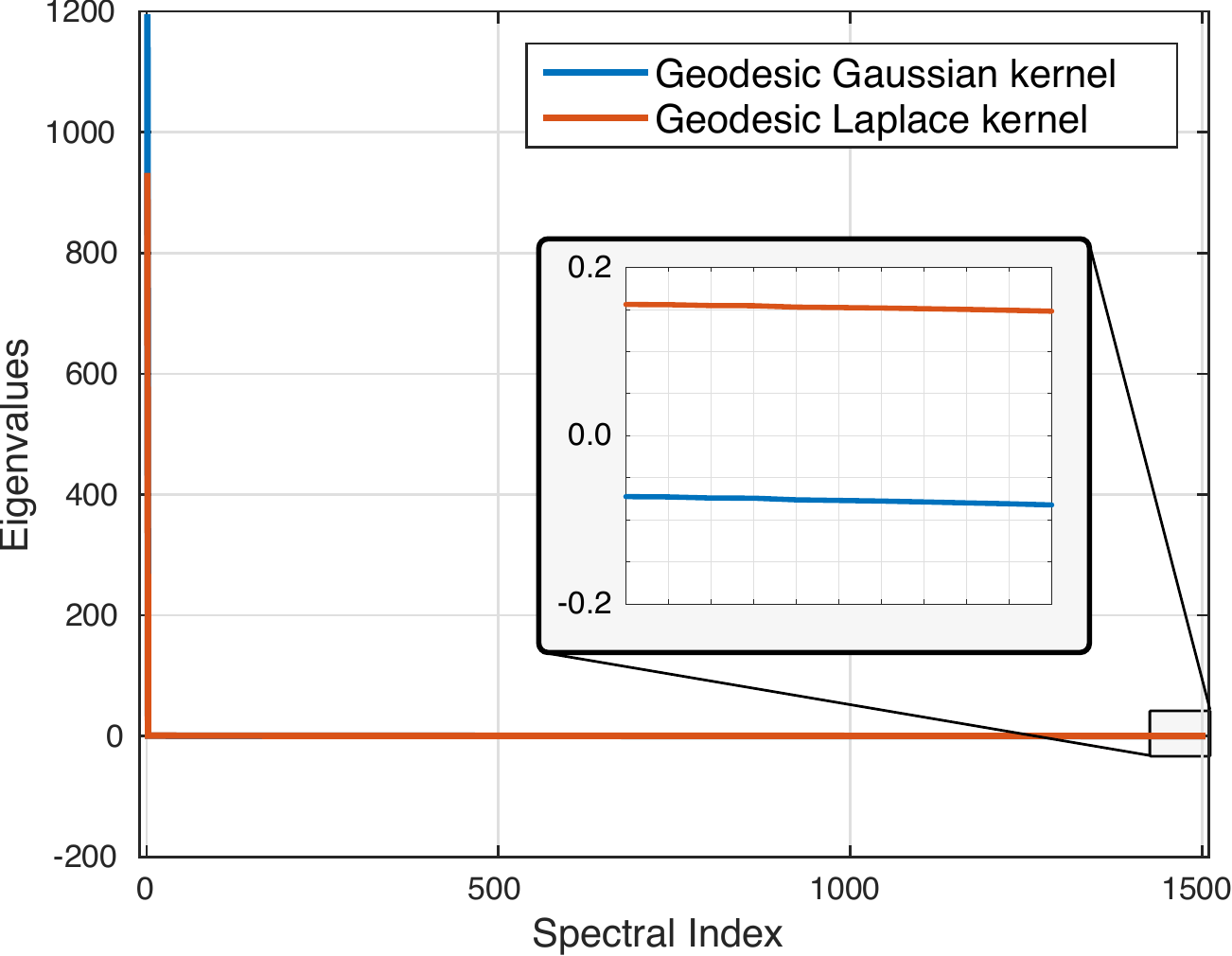} &
    \includegraphics[width=0.32\textwidth]{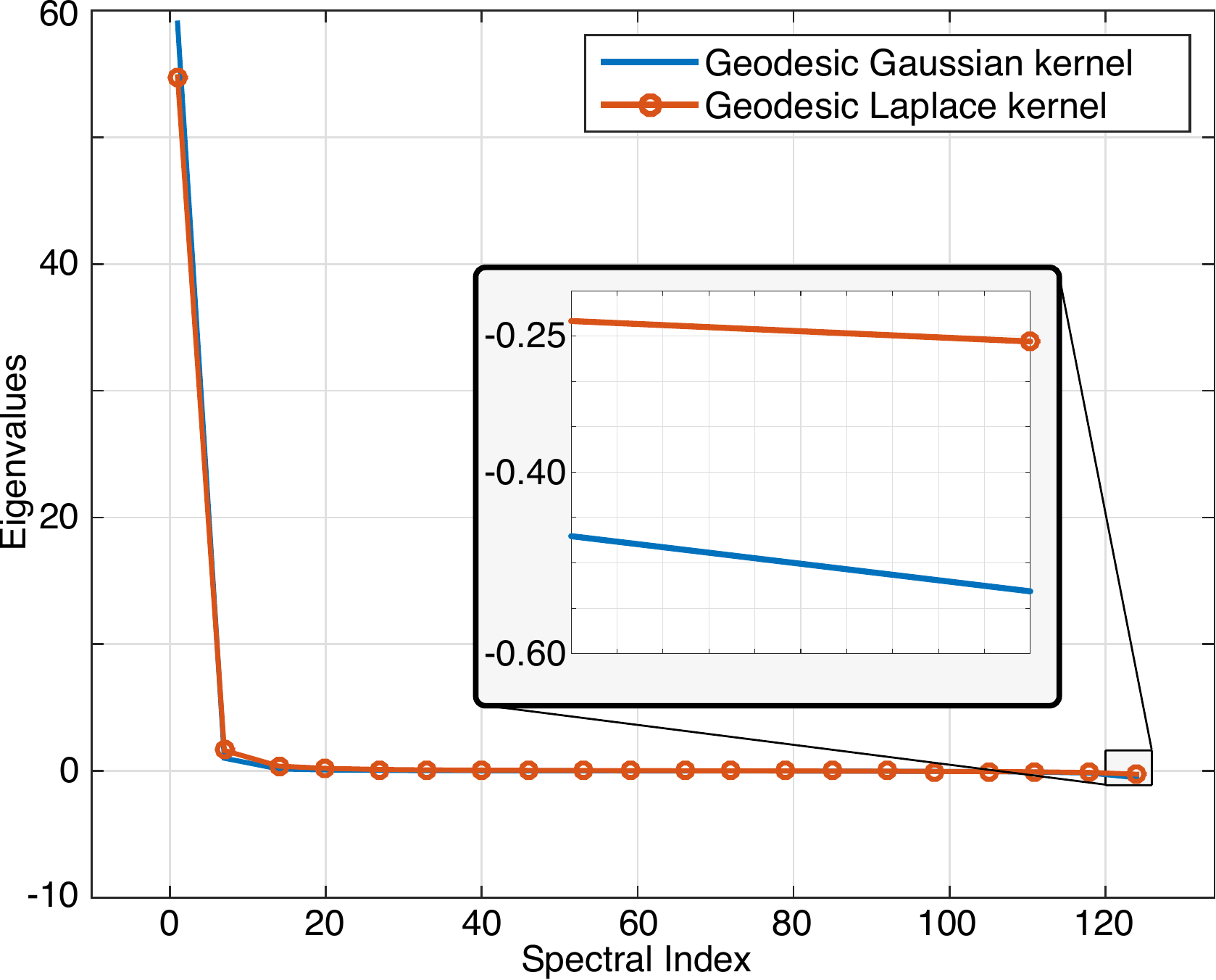} &
    \includegraphics[width=0.32\textwidth]{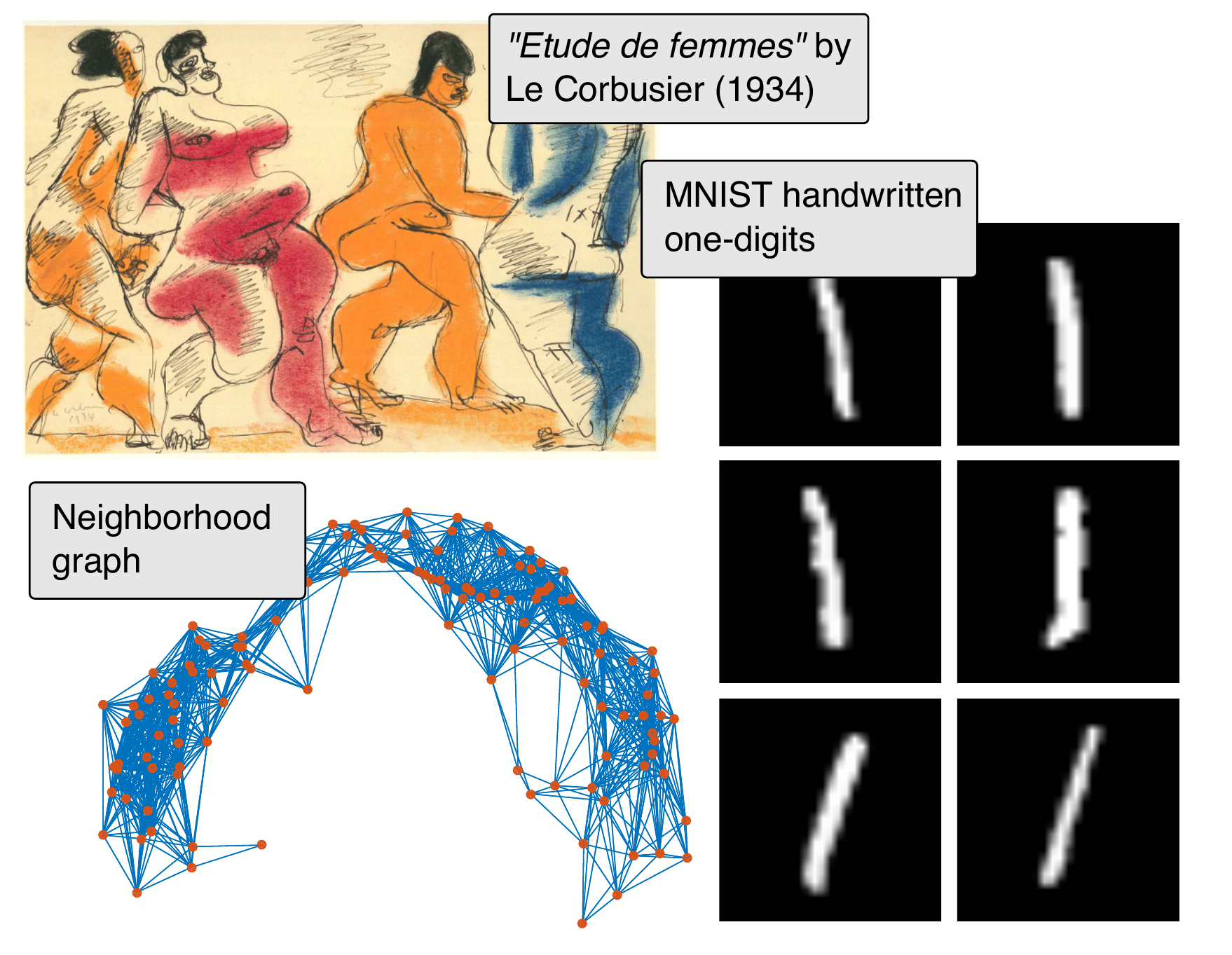}
  \end{tabular}
  \caption{(a)--(e): Eigenspectra of the Gram matrices for different geodesic exponential kernels on different manifolds. (f) Data used in panels b and e.}
  \label{fig:results}
\end{figure*}


Next, we consider kernels on the unit sphere. We generate data from salient points
in the 1934 painting \emph{Etude de femmes} by \emph{Le Corbusier}. At each salient
point a HOG \cite{dalal2005histograms} descriptor is computed; as these descriptors are normalized
they are points on the unit sphere. Fig.~\ref{fig:results}b shows the eigenspectrum
of the Gram matrix of the geodesic Gaussian and Laplacian kernels. While the geodesic Gaussian
kernel has negative eigenvalues, the geodesic Laplacian does not. This verifies our theoretical results from Sec.~\ref{sec:relatedwork} and Table~\ref{tab:PD_or_not}.


We also consider data on the Grassmann manifold. First, we consider one-dimensional
subspaces as spanned by samples from a $50$-dimensional isotropic normal distribution.
We again consider both the Gaussian and the Laplacian kernel; here both under the
usual intrinsic metric, but also under the extrinsic metric \cite{hauberg:cvpr:2014}.
Fig.~\ref{fig:results}c shows the eigenspectra of the different Gram matrices.
Only the Gaussian kernel under the intrinsic metric appears to have negative
eigenvalues, while the remaining have strictly positive eigenvalues.


Next, we consider $15$-dimensional subspaces of $\mathbb{R}^{100}$ drawn from a 
uniform distribution on the corresponding Grassmannian. We only consider kernels under the intrinsic metric, and the eigenspectra are shown in Fig.~\ref{fig:results}d. The Gaussian kernel has negative eigenvalues, while the
Laplacian kernel does not. Note that this does not prove that the Laplacian
kernel is PD on the Grassmannian; in fact, we know theoretically from~\cite{robertson} that it is generally \emph{not}.


Finally, we consider shortest-path distances on nearest neighbor graphs as
commonly used in manifold learning.
We take $124$ one-digits from the MNIST data set \cite{mnist}, project them
into their two leading principal components, form a $\epsilon$-neighborhood
graph, and compute shortest path distances. We then compute the eigenspectrum
of both the Gaussian and Laplacian kernel; Fig.~\ref{fig:results}e show these spectra. Both kernels have negative eigenvalues, which empirically show that
the shortest-path graph distance is \emph{not} CND.

\section{Discussion and outlook} \label{sec:discussion}

We have shown that exponential kernels based on geodesic distances in a metric space or Riemannian manifold will only be positive definite if the geodesic metric satisfies strong linearization properties:
\begin{packed_itemize}
  \item for Gaussian kernels, the metric space must be flat (or Euclidean).
  \item for Laplacian kernels, the metric must be conditionally negative definite. This implies that the square root metric can be embedded in a Hilbert space.
\end{packed_itemize}
With the exception of select metric spaces, these results show that geodesic exponential kernels are not well-suited for data analysis in curved spaces.

%

This does, however, \emph{not} imply that kernel methods can never be extended to metric spaces. Gong \etal \cite{gong:cvpr:2012} provide an elegant kernel based on the geometry of the Grassmann manifold, which is well-suited for domain adaptation. This kernel is not a geodesic exponential kernel, yet it strongly incorporates the geodesic structure of the Grassmannian. As an alternative, the Euclidean Gaussian kernel is a \emph{diffusion kernel}. Such kernels are positive definite on Riemannian manifolds \cite{lafferty:jmlr:2005}, and might provide a suitable kernel. However, these kernels generally do not have closed-form expressions, which may hinder their applicability.

%

Most existing machine learning tools assume a linear data space. Kernel methods only encode non-linearity via a non-linear transformation between a data space and a linear feature space. Our results illustrate that such methods are limited for analysis of data from non-linear spaces. Emerging generalizations of learning tools such as regression~\cite{hinkle,steinke,fletcher2} or transfer learning~\cite{xie:iccv:2013, oren} to nonlinear data spaces are encouraging. We believe that learning tools that operate directly in the non-linear data space, without a linearization step, is the way forward.

%
%
%
%
%
%
%
%
%


{\small
\bibliographystyle{ieee}
\bibliography{riemannian_kernels}
}

\end{document}